\def\a{{\bf a}}
\def\c{{\bf c}}
\def\e{{\bf e}}
\def\S{{\bf S}}
\def\x{{\bf x}}
\def\y{{\bf y}}
\def\z{{\bf z}}
\def\w{{\bf w}}
\def\0{{\bf 0}}
\def\1{{\bf 1}}
\def\NM{{\mathcal N}}
\def\LM{{\mathcal L}}
\def\DM{{\mathcal D}}
\def\RB{{\mathbb R}}
\def\EB{{\mathbb E}}
\def\alp{\mbox{\boldmath$\alpha$\unboldmath}}
\def\Om{\mbox{\boldmath$\Omega$\unboldmath}}
\def\ph{\mbox{\boldmath$\phi$\unboldmath}}
\def\pii{\mbox{\boldmath$\pi$\unboldmath}}
\def\tha{\mbox{\boldmath$\theta$\unboldmath}}
\def\Tha{\mbox{\boldmath$\Theta$\unboldmath}}
\def\muu{\mbox{\boldmath$\mu$\unboldmath}}
\def\Si{\mbox{\boldmath$\Sigma$\unboldmath}}
\def\Gam{\mbox{\boldmath$\Gamma$\unboldmath}}
\def\gamm{\mbox{\boldmath$\gamma$\unboldmath}}
\def\Lam{\mbox{\boldmath$\Lambda$\unboldmath}}
\def\lamm{\mbox{\boldmath$\lambda$\unboldmath}}
\newcommand{\tabref}[1]{Table~\ref{#1}}
\newcommand{\secref}[1]{Sec.~\ref{#1}}
\newcommand{\figref}[1]{Fig.~\ref{#1}}
\newcommand{\lemref}[1]{Lemma~\ref{#1}}
\newcommand{\thmref}[1]{Theorem~\ref{#1}}
\newcommand{\defref}[1]{Definition~\ref{#1}}
\newcommand{\eqnref}[1]{Eq.~\ref{#1}}
\newcommand{\algref}[1]{Alg.~\ref{#1}}
\newcommand{\appref}[1]{Appendix~\ref{#1}}
\renewcommand{\tilde}{\widetilde}
\renewcommand{\hat}{\widehat}
\renewcommand{\frac}{\tfrac}
\definecolor{green}{rgb}{0,0.5,0}
\def\blue#1{\textcolor{blue}{#1}}
\def\red#1{\textcolor{red}{#1}}
\def\green#1{\textcolor{green}{#1}}
\def\purple#1{\textcolor{purple}{#1}}
\def\brown#1{\textcolor{brown}{#1}}
\def\orange#1{\textcolor{orange}{#1}}
\renewcommand{\algref}[1]{Alg.~\ref{#1}}
\theoremstyle{plain}
\newtheorem{theorem}{Theorem}[section]
\newtheorem{lemma}[theorem]{Lemma}
\theoremstyle{definition}
\newtheorem{definition}{Definition}[section]
\theoremstyle{remark}
\title{Variational Language Concepts for Interpreting \\Foundation Language Models}
\author{Hengyi Wang\thanks{Correspondence to: Hengyi Wang \textless hengyi.wang@rutgers.edu\textgreater}~~~~Shiwei Tan~~~~Zhqing Hong~~~~Desheng Zhang~~~~Hao Wang
\\\\Department of Computer Science, Rutgers University
}
\begin{document}
\maketitle
\begin{abstract}
Foundation Language Models (FLMs) such as BERT and its variants have achieved remarkable success in natural language processing. To date, the interpretability of FLMs has primarily relied on the attention weights in their self-attention layers. However, these attention weights only provide word-level interpretations, failing to capture higher-level structures, and are therefore lacking in readability and intuitiveness. 
To address this challenge, we first provide a formal definition of \emph{conceptual interpretation} and then 
propose a variational Bayesian framework, dubbed VAriational Language Concept (VALC), to go beyond word-level interpretations and provide concept-level interpretations. 
Our theoretical analysis shows that our VALC finds the optimal language concepts to interpret FLM predictions. Empirical results on several real-world datasets show that our method can successfully provide conceptual interpretation for FLMs\footnote{{Code will soon be available at https://github.com/Wang-ML-Lab/interpretable-foundation-models}}. 
\end{abstract}

\section{Introduction} \label{sec:intro}
Foundation language models (FLMs) such as BERT~\citep{devlin2018bert} and its variants~\citep{lan2019albert,liu2019roberta,he2021deberta,portes2023mosaicbert} have achieved remarkable success in natural language processing. These FLMs are usually large attention-based neural networks that follow a pretrain-finetune paradigm, where models are first pretrained on large datasets and then finetuned for a specific task. As with any machine learning models, interpretability in FLMs has always been a desideratum, especially in decision-critical applications (e.g., healthcare). 


\begin{figure}[t]
        \centering
        \includegraphics[width = 0.48\textwidth]{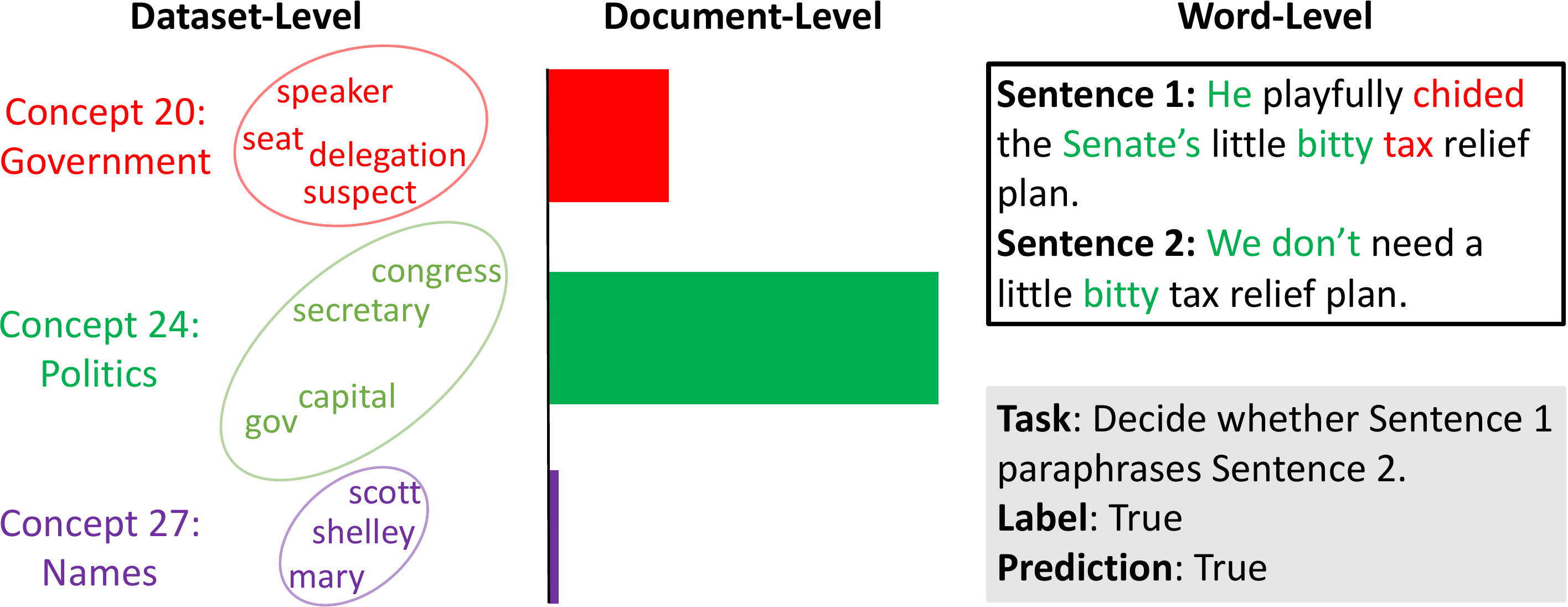}
        \caption[width = 0.8\linewidth]{Visualization of VALC's learned concepts. A document consists of two sentences. The task is to decide whether `Sentence 1' paraphrases `Sentence 2'.  
        \textbf{Left:} Dataset-level concepts for MRPC dataset with $3$ concepts and their nearest word embeddings. \textbf{Middle:} Document-level concept strength, showing that this document is mostly related to \red{Concept 20} and \green{Concept 24}. 
        \textbf{Right:} Word-level concepts, where the FLM correctly predicts the label to be `True', and VALC interprets that this is because the both sentences consist of words with \green{Concept 24}, i.e., \emph{Politics}.}
        \label{fig:teaser}
        \vskip -0.5cm
\end{figure}
To date, FLMs' interpretability 
has primarily relied on the attention weights in self-attention layers. However, these attention weights only provide raw word-level importance scores as interpretations. Such low-level interpretations fail to capture higher-level semantic structures, and hence lack {readability and intuitiveness}. For example, low-level interpretations often fail to capture influence of similar words to predictions, leading to unstable or even unreasonable explanations (see Sec.~\ref{sec:interpret} for details). 

In this paper, we aim to go beyond word-level attention and interpret FLM predictions at the concept level. Such higher-level semantic interpretations are complementary to word-level importance scores and often more readable and intuitive. 
{For example, as shown in~\figref{fig:teaser}, VALC interprets the FLM with the following multi-level concepts (details in~\appref{app:interpret_more}):}
\begin{itemize}[nosep]
    \item \textbf{Dataset-level} concepts are highlighted by the top words and the distribution of their embeddings in the PLM (Fig. 1(left)). For example, \emph{Concept 20 (Government)} corresponds to the red ellipse, encompassing words relevant to government entities and activities, as shown in Fig. 1(left).
  \item \textbf{Document-level} concepts are demonstrated by each document's topics; for instance, in the 3 bars representing probability distribution over 3 concepts for the document in Fig. 1(middle), VALC identifies \emph{Concept 24}, i.e., `politics', and \emph{Concept 20}, i.e., `government', as considerably more relevant concepts compared to \emph{Concept 27}, i.e. `names'.
  \item \textbf{Word-level} concepts are identified by words in documents. For example, in the box displaying the document in Fig. 1(right), VALC highlights the words `chided' and `tax' because they are highly related to \emph{Concept 20}, i.e., `government'. Terms like `Senate' and `bitty' are associated with \emph{Concept 24}, i.e. `politics', aligning with the document-level concepts.
\end{itemize}

We start by developing a comprehensive and formal definition of \emph{conceptual interpretation} with four desirable properties: (1) multi-level structure, (2) normalization, (3) additivity, and (4) mutual information maximization. With this definition, we then propose a variational Bayesian framework, dubbed VAriational Language Concept (VALC), to provide \emph{dataset-level}, \emph{document-level}, and \emph{word-level} (the first property) conceptual interpretation for FLM predictions. 
Our theoretical analysis shows that maximizing our VALC's evidence lower bound is equivalent to inferring the optimal conceptual interpretation with \emph{Properties (1-3)} while maximizing the mutual information between the inferred concepts and the observed embeddings from FLMs, i.e., \emph{Property (4)}.

Drawing inspiration from hierarchical Bayesian deep learning~\cite{BDL,BDLSurvey,NPN}, the core of our idea is to treat a FLM's contextual word embeddings (and their corresponding attention weights) as observed variables and build a probabilistic generative model to automatically infer the higher-level semantic structures (e.g., concepts or topics) from these embeddings and attention weights, thereby interpreting the FLM's predictions at the concept level. 
{Our VALC is compatible with any attention-based FLMs and can work as an conceptual interpreter, which explains the FLM predictions at  multiple levels with theoretical guarantees. }
Our contributions are as follows:
\begin{itemize}[nosep]
\item We identify the problem of multi-level interpretations for FLM predictions, develop a formal definition of \emph{conceptual interpretation}, and propose VALC as the first general method to infer such conceptual interpretation. 
\item {Theoretical analysis shows that learning VALC is equivalent to inferring the optimal conceptual interpretation according to our definition.} 
\item Quantitative and qualitative analysis on real-world datasets show that VALC can infer meaningful language concepts to effectively and intuitively interpret FLM predictions. 
\end{itemize}

\section{Related Work}
\textbf{Foundation Language Models.} 
Foundation language models are large attention-based neural networks that follow a pretrain-finetune paradigm. Usually they are first pretrained on large datasets in a self-supervised manner and then finetuned for a specific downstream task. 
BERT~\citep{devlin2018bert} is a pioneering FLM that has shown impressive performance across multiple downstream tasks. 
{Following BERT, there have been 
variants~\cite{he2021deberta,clark2020electra,yang2019xlnet,liu2019roberta,lewis2019bart} that design different self-supervised learning objectives or training schemes to achieve better performance.} 
{While FLMs offer attention weights for interpreting predictions at the word level, these interpretations lack readability and intuitiveness because they fail to capture higher-level semantic structures.}


\textbf{Interpretation Methods for FLMs.} 
Existing conceptual interpretation methods for FLMs typically rely on topic models~\cite{blei2003latent,blei2006dynamic,blei2012probabilistic,wang2012continuous,chang2009relational,mcauliffe2007supervised,hoffman2010online} and prototypical part networks~\cite{chen2019looks}. There has been recent work that employs deep neural networks to learn topic models more efficiently~\citep{card2017neural,xing2017topic,peinelt2020tbert}, using techniques such as amortized variational inference. There is also work that improves upon traditional topic models by either leveraging word similarity as a regularizer for topic-word distributions~\citep{das2015gaussian,batmanghelich2016nonparametric} or including word embeddings into the generative process~\citep{hu2012latent,dieng2020topic,bunk2018welda,duan2021sawtooth}.
There is also work that builds topic models upon embeddings from FLMs~\citep{grootendorst2020bertopic, zhang2022cetoic,wang2022mixtures,zhao2020optimal,meng2022topic}. However, these methods often rely on a pipeline involving dimensionality reduction and basic clustering, which is not end-to-end, leading to potential information loss between FLM embeddings and clustering outcomes. This can result in \emph{unfaithful} interpretations for the underlying FLM. Additionally, they typically generate interpretations at a single level (e.g., document level), lacking a multi-level conceptual structure. 

Beyond topic models, attribution-based approaches such as LIME~\cite{ribeiro2016lime} and SHAP~\cite{lundberg2017shap} assign importance to input features to explain predictions.  {Concept bottleneck models (CBMs)~\cite{koh2020CBM, yuksekgonul2023posthoc, yang2023language,kim2018tcav,schulz2020restricting,paranjape2020information,schrouff2021best} offer interpretations by learning conceptual activation and then performing classifications on these concepts, while inherent models~\cite{xie2023proto,ren2023defining,shi2021corpus} focus on model redesign/re-training for interpretability. However, these approaches often require extra supervision or re-training, making them unsuitable for our setting.
In contrast, our method is inherently multi-level and end-to-end, models {\emph{concepts}} across dataset, document, and word levels, and produces faithful post-hoc interpretations for any models based on {FLMs} with theoretical guarantees.}

\section{Methods}

In this section, we formalize the definition of \emph{conceptual interpretation}, and describe our proposed VALC for conceptual interpretation of FLMs.


\subsection{Problem Setting and Notation}  \label{sec:notation}
We consider a corpus of $M$ documents, where the $m$'th document contains $J_m$ words, and a FLM $f(\mathcal{D}_m)$, which takes as input the document $m$ (denoted as $\mathcal{D}_m$) with $J_m$ words and outputs (1) a CLS embedding $\c_m\in\mathbb{R}^d$, (2) $J_m$ contextual word embeddings $\e_m \triangleq [\e_{mj}]_{j=1}^{J_m}$, and (3) 
the attention weights $\a_m^{(h)} \triangleq [a_{mj}^{(h)}]_{j=1}^{J_m}$ between each word and the last-layer CLS token, where $h$ denotes the $h$'th attention head. We denote the average attention weight over H heads as $a_{mj}=\frac{1}{H}\sum_{h=1}^H a_{mj}^{(h)}$ and correspondingly $\a_m \triangleq [a_{mj}]_{j=1}^{J_m}$ (see the FLM at the bottom of \figref{fig:overview}). 
\begin{figure}[t]
        \centering
        \includegraphics[width = 0.40\textwidth]
        {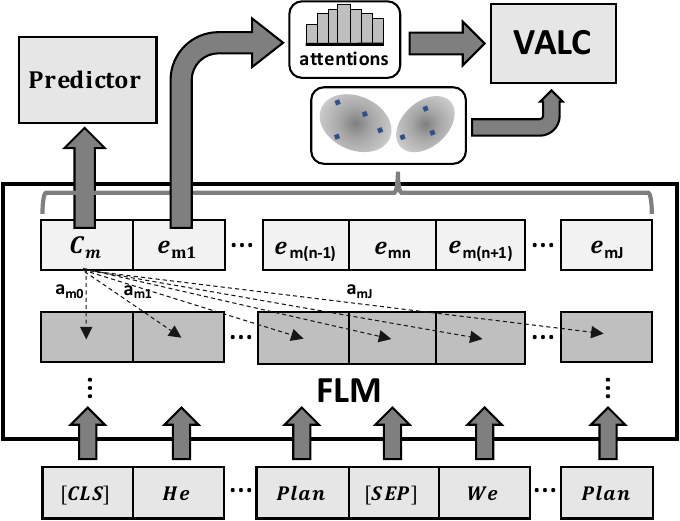}
        \caption[width = 0.8\linewidth]{Overview of VALC framework. }
        \label{fig:overview}
\end{figure}
In FLMs, these last-layer CLS embeddings are used as document-level representations for downstream tasks (e.g., document classification). 
Furthermore, our VALC assumes $K$ concepts (topics) for the corpus. For document $m$, our VALC interpreter tries to infer a concept distribution vector $\tha_m\in \mathbb{R}^K$ (also known as the topic proportion in topic models) for the whole document and a concept distribution vector $\ph_{mj}=[\phi_{mjk}]_{k=1}^K \in \mathbb{R}^K$ for word $j$ in document $m$. In our continuous embedding space, the $k$'th concept is represented by a Gaussian distribution, $\mathcal{N}(\muu_k,\Si_k)$, of contextual word embeddings; we use shorthand $\Om_k=(\muu_k,\Si_k)$ for brevity. 
The goal is to interpret FLMs' predictions \emph{at the concept level} using the inferred document-level concept vector $\tha_m$, word-level concept vector $\ph_{mj}$, and the learned embedding distributions $\{\mathcal{N}(\muu_k,\Si_k)\}_{k=1}^K$ for each concept (see~\secref{sec:interpret} for detailed descriptions and visualizations). 

\subsection{Formal Definition of Language Concepts}\label{sec:def}

Below we formally define `conceptual interpretation' for FLM predictions (see notations in~\secref{sec:notation}):

\begin{definition}[\textbf{Conceptual Interpretation}]\label{def:concept} 
Assume $K$ concepts and a dataset $\DM$ containing $M$ documents, each with $J_m$ words ($1\le m\le M$). Conceptual interpretation for a document $m$ consists of $K$ \emph{dataset-level} variables $\{\Om_k\}_{k=1}^K$, a \emph{document-level} variable $\tha_m$, and $J_m$ \emph{word-level} variables $\{\ph_{mj}\}_{j=1}^{J_m}$ with the following properties:
\begin{enumerate}[nosep,leftmargin=18pt]
    \item[(1)] \textbf{Multi-Level Structure.} 
    Conceptual interpretation has a three-level structure: 
    \begin{enumerate}[nosep,leftmargin=10pt]
    \item[(a)] Each \emph{dataset-level} variable $\Om_k=(\muu_k,\Si_k)$ describes the $k$'th concept; $\muu_k\in\RB^{d}$ and $\Si_k\in\RB^{d\times d}$ denote the mean and covariance of the $k$'th concept in the embedding space (i.e., $\e_{mj}\in\RB^d$), respectively. 
    \item[(b)] Each \emph{document-level} variable $\tha_m\in\RB^{K}_{\geq 0}$ describes document $m$'s relation to the $K$ concepts. 
    \item[(c)] Each \emph{word-level} variable $\ph_{mj}\in \RB^{K}_{\geq 0}$ describes word $j$'s relation to the $K$ concepts.     
    \end{enumerate}
    \item[(2)] \textbf{Normalization.}~The document- and word-level interpretations, $\tha_m$ and $\ph_{mj}$, are normalized:
    \begin{enumerate}[nosep,leftmargin=10pt]
        \item[(a)] $\sum_{k=1}^{K} \theta_{mk} = 1$ for document $m$.
        \item[(b)] $\sum_{k=1}^{K} \phi_{mjk} = 1$ for word $j$ in document $m$.
    \end{enumerate}
    \item[(3)] \textbf{Additivity.}~We can add/subtract the $k$'s concept from the contextual embeddings $\e_{mj}$ of word $j$ in document $m$, i.e., $\e_{mj} \leftarrow \e_{mj} \pm x_k \muu_k $ {($x_k$ is the editing weight of concept $k$)}. 
    \item[(4)] \textbf{Mutual Information Maximization.} 
    The conceptual interpretation achieves maximum mutual information between the {observed embeddings} $\e_m$ in FLMs and the document-level/word-level interpretation, $\tha_m$ and $\ph_{mj}$. 
\end{enumerate}
\end{definition}

    
In~\defref{def:concept}, Property (1) provides comprehensive three-level conceptual interpretation for FLM predictions, Property (2) ensures proper normalization in concept assignment at the document and word levels, Property (3) enables better concept editing (more details in~\secref{sec:exp_prune}) to modify FLM predictions, and Property (4) ensures minimal information loss when interpreting FLM predictions. 


 
 

 

 

\begin{figure}[t]
        \centering
        \includegraphics[width = 0.4\textwidth]
        {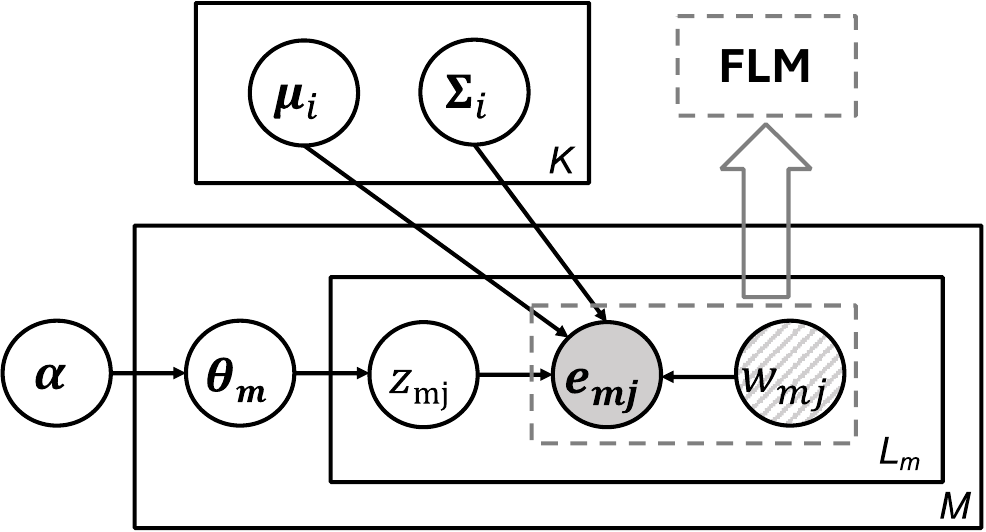}
        \caption{Graphical model of our VALC. The \emph{striped} circle represents \emph{continuous} word counts. }
        \label{fig:pgm}
\end{figure}
\subsection{VAriational Language Concepts (VALC)}\label{sec:valance}


\textbf{Method Overview.} 
Drawing inspiration from hierarchical Bayesian deep learning~\cite{BDL,BDLSurvey,NPN,CausalTrans,CounTS,VDI,PACE}, we propose our model, VAriational Language Concepts (VALC), to infer the optimal conceptual interpretation described in~\defref{def:concept}. 
Different from \emph{static} word embeddings~\citep{mikolov2013efficient} and topic models, FLMs produce \emph{contextual} word embeddings with continuous-value entries $[\e_{mj}]_{j=1}^{J_m}$ and more importantly, associate each word embedding with a continuous-value attention weight $[a_{mj}]_{j=1}^{J_m}$; therefore this brings unique challenges. 

To effectively discover latent concept structures learned by FLMs at the dataset level and interpret FLM predictions at the data-instance level, our VALC treats both the contextual word embeddings and their associated attention weights as observations to learn a probabilistic generative model of these observations, as shown in \figref{fig:overview}. The key idea is to use the attention weights from FLMs to compute a virtual continuous count for each word, and model the contextual word embedding distributions with Gaussian mixtures. 
The generative process of VALC is as follows (we mark key connection to FLMs in \blue{blue} and show the corresponding graphical model in~\figref{fig:pgm}): 

For each document $m,1\leq m \leq M$, 
\begin{enumerate}[nosep,leftmargin=18pt]
\item Draw the document-level concept distribution vector $\tha_m\sim \text{Dirichlet}(\alp)$.
\item For each word $j$ $(1\leq j \leq J_m)$,
\begin{enumerate}[nosep,leftmargin=10pt]
\item Draw the word-level concept index 
$z_{mj}\sim \text{Categorical}(\tha_m)$.
\item With a \blue{continuous} word count $w_{mj}\in \mathbb{R}$ from the \blue{FLM's attention weights}, draw the \blue{contextual word embedding} of the \blue{FLM} from the corresponding Gaussian component $\e_{mj}\sim \mathcal{N}(\muu_{z_{mj}},\Si_{z_{mj}})$.
\end{enumerate}
\end{enumerate}

Given the generative process above, discovery of latent concept structures in FLMs at the dataset level boils down to learning the parameters $\{\muu_k,\Si_k\}_{k=1}^K$ for the $K$ concepts. Intuitively the global parameters $\{\muu_k,\Si_k\}_{k=1}^K$ are shared across different documents, and they define a mixture of $K$ Gaussian distributions. Each Gaussian distribution describes a `cluster' of words and their contextual word embeddings.

Similarly, interpretations of FLM predictions at the data-instance level is equivalent to inferring the latent variables, i.e., document-level concept distribution vectors $\tha_m$ and word-level concept indices $z_{mj}$. Below we highlight several important aspects of our VALC designs.

\textbf{Attention Weights as \emph{Continuous} Word Counts.} 
Different from typical topic models~\citep{blei2003latent,blei2012probabilistic} and word embeddings~\citep{mikolov2013efficient} that can only handle \emph{discrete} word counts, our VALC can handle \emph{continuous} (virtual) word counts; this better aligns with continuous attention weights in FLMs. Specifically, we denote as $w_{mj}\in\RB_{\geq 0}$ the (non-negative real-valued) \emph{continuous word count} for the $j$'th word in document $m$. We explore three schemes of computing $w_{mj}$:
\begin{itemize}[nosep,leftmargin=18pt]
    \item \textbf{Identical Weights:} Use identical weights for different words, i.e., $w_{mj}=1, \forall m,j$. This is equivalent to typical discrete word counts. 
    \item \textbf{Attention-Based Weights with Fixed Length:} Use $w_{mj}= J'a_{mj}$, where $J'$ is a fixed sequence length shared across all documents. 
    \item \textbf{Attention-Based Weights with Variable Length:} Use $w_{mj}= \nicefrac{J_m a_{mj}}{\sum_{i=1}^{J_m}a_{mi}}$, where $J_m$ is true sequence length without padding. Note that in practice, $\sum_{i=1}^{J_m}a_{mi}\neq 1$ due to padding tokens in FLMs. 
\end{itemize}





\textbf{Contextual Continuous Word Representations.} 
Note that different from topic models~\citep{blei2003latent} and typical word embeddings~\citep{mikolov2013efficient,dieng2020topic} where word representations are \emph{static}, word representations in FLMs are \emph{contextual}; specifically, the same word can have different embeddings in different documents (contexts). For example, the word `soft' can appear as the $j_1$'th word in document $m_1$ and as the $j_2$'th word in document $m_2$, and therefore have two different embeddings (i.e., $\e_{m_1j_1}\neq\e_{m_2j_2}$). 

Correspondingly, in our VALC, 
we do not constrain the same word to have a static embedding; instead we assume that a word embedding is drawn from a Gaussian distribution corresponding to its latent topic. {Note} that word representations in our VALC is continuous, which is different from typical topic models~\citep{blei2003latent} based on (discrete) bag-of-words representations.

\subsection{Objective Function}\label{sec:learning}
Below we discuss the inference and learning procedure for VALC. We start by introducing the \emph{inference} of document-level and word-level concepts (i.e., $z_{mj}$ and $\tha_m$) given the global concept parameters (i.e., $\{(\muu_k,\Si_k)\}_{k=1}^K$), and then introduce the \emph{learning} of these global concept parameters.

\subsubsection{Inference}\label{sec:inf}

\textbf{Inferring Document-Level and Word-Level Concepts.} We formulate the problem of interpreting FLM predictions at the concept level as inferring document-level and word-level concepts. Specifically, given global concept parameters $\{(\muu_k,\Si_k)\}_{k=1}^K$, the \emph{contextual} word embeddings $\e_m \triangleq [\e_{mj}]_{j=1}^{J_m}$, and the associated attention weights $\a_m \triangleq [a_{mj}]_{j=1}^{J_m}$, a FLM produces for each document $m$, our VALC infers the posterior distribution of the document-level concept vector $\tha_m$, i.e., $p(\tha_m | \e_m,\a_m,\{(\muu_k,\Si_k)\}_{k=1}^K)$, and the posterior distribution of the word-level concept index $z_{mj}$, i.e., $p(z_{mj} | \e_m,\a_m,\{(\muu_k,\Si_k)\}_{k=1}^K)$. 

\textbf{Variational Distributions.} 
These posterior distributions are intractable; we therefore resort to variational inference~\citep{VI,blei2003latent} and use variational distributions $q(\tha_m | \gamm_m)$ and $q(z_{mj} | \ph_{mj})$ to approximate them. Here $\gamm_m\in \RB^K$ and $\ph_{mj} \triangleq [\phi_{mjk}]_{k=1}^K \in \RB^K$ are variational parameters to be estimated during inference. This leads to the following joint variational distribution: 
\begin{align}
&q(\tha_m, \{\z_{mj}\}_{j=1}^{J_m}|\gamm_m, \{\ph_{mj}\}_{j=1}^{J_m}) \nonumber\\
=~& q(\tha_m | \gamm_m) \cdot \prod\nolimits_{j=1}^{J_m}q(z_{mj} | \ph_{mj}). \label{eq:joint_vi}
\end{align}
\textbf{Evidence Lower Bound.} 
For each document $m$, finding the optimal variational distributions is then equivalent to maximizing the following evidence lower bound (ELBO): 
\begingroup\makeatletter\def\f@size{9.5}\check@mathfonts
\def\maketag@@@#1{\hbox{\m@th\large\normalfont#1}}
\begin{align}
     &\mathcal{L}(\gamm_m, \{\ph_{mj}\}_{j=1}^{J_m}; \alp, \{(\muu_k,\Si_k)\}_{k=1}^K) \nonumber\\
     =~&\EB_q[\log p(\tha_m|\alp)]+ \sum\nolimits_{j=1}^{J_m}\EB_q[\log p(z_{mj}|\tha_m)] \nonumber \\
    & + \sum\nolimits_{j=1}^{J_m}\EB_q[\log p(\e_{mj}|z_{mj},\muu_{z_{mj}}, \Si_{z_{mj}})] \nonumber\\
     &- \EB_q[\log q(\tha_m)] - \sum\nolimits_{j=1}^{J_m}\EB_q[\log q(z_{mj})], \label{eq:elbo}
\end{align}
\endgroup
where the expectation is taken over the joint variational distribution in~\eqnref{eq:joint_vi}. 

\textbf{Likelihood with \emph{Continuous} Word Counts.} 
One key difference between VALC and typical topic models~\citep{blei2003latent,blei2012probabilistic} is the virtual continuous (real-valued) word counts (discussed in~\secref{sec:valance}). Specifically, we define the likelihood in the third term of \eqnref{eq:elbo} as: 
\begingroup\makeatletter\def\f@size{7.5}\check@mathfonts
\def\maketag@@@#1{\hbox{\m@th\large\normalfont#1}}
\begin{align}
p(\e_{mj}|z_{mj},\muu_{z_{mj}}, \Si_{z_{mj}})=[\NM(\e_{mj};\muu_{mj},\Si_{mj})]^{w_{mj}}.\label{eq:con_count}
\end{align}
\endgroup
Note that \eqnref{eq:con_count} is the likelihood of $w_{mj}$ (virtual) words, where $w_{mj}$ is a {real} value derived from the FLM's attention weights (details in \secref{sec:valance}). 
{Therefore}, in the third item of \eqnref{eq:elbo}, we have: 
\begingroup\makeatletter\def\f@size{7.5}\check@mathfonts
\def\maketag@@@#1{\hbox{\m@th\large\normalfont#1}}
\begin{align}
    &\EB_q[\log p(\e_{mj}|z_{mj},\muu_{z_{mj}}, \Si_{z_{mj}})]  \nonumber\\
    =~&\sum\nolimits_{{k}} \phi_{mjk}w_{mj}\log\NM(\e_{mj}|\muu_k,\Si_k)\nonumber\\
    =~&\sum\nolimits_{{k}}\phi_{mjk} w_{mj} \{-\frac{1}{2}(\e_{mj}-\muu_k)^T\Si_k^{-1}(\e_{mj}-\muu_k) \nonumber\\
    &-\log[(2\pi)^{d/2} \vert \Si_k\vert^{1/2}]\}. \label{loss}
\end{align}
\endgroup
\textbf{Update Rules.} 
Taking the derivative of the ELBO in \eqnref{eq:elbo} w.r.t.  $\phi_{mjk}$ (see~\appref{sec:app_update_rules} for details) and setting it to $0$ yields the update rule for $\phi_{mjk}$:
\begingroup\makeatletter\def\f@size{7.5}\check@mathfonts
\def\maketag@@@#1{\hbox{\m@th\large\normalfont#1}}
\begin{align}  \label{eq:update_phi}
     \phi_{mjk} \propto~& \frac{w_{mj}}{\vert \Si_k\vert^{1/2}} 
      \exp[\Psi(\gamma_{mk})-\Psi(\sum\nolimits_{k'} \gamma_{mk'})  \nonumber\\
    & -\frac{1}{2}(\e_{mj}-\muu_k)^T\Si_k^{-1}(\e_{mj}-\muu_k)],
\end{align}  
\endgroup
with the normalization constraint $\sum\nolimits_{k=1}^K \phi_{mjk}=1$.
\begin{align}
    \gamma_{mk} = \alpha_k + \sum\nolimits_{j=1}^{J_m}  \phi_{mjk}w_{mj} ,\label{eq:update_gamma}
\end{align}
where $\alp\triangleq[\alpha_k]_{k=1}^K$ is the hyperparameter for the Dirichlet prior distribution of $\tha_m$. In summary, the inference algorithm will alternate between updating $\phi_{mjk}$ for all $(m,j,k)$ tuples and updating $\gamm_{mk}$ for all $(m,k)$ tuples. 


\subsubsection{Learning}\label{sec:learn}
\textbf{Learning Dataset-Level Concept Parameters.} The inference algorithm in~\secref{sec:inf} assumes availability of the dataset-level (global) concept parameters $\{(\muu_k,\Si_k)\}_{k=1}^K$. To learn these parameters, one needs to iterate between (1) inferring document-level variational parameters $\gamm_m$ as well as word-level variational parameters $\ph_{mj}$ in \secref{sec:inf} and (2) learning dataset-level concept parameters $\{(\muu_k,\Si_k)\}_{k=1}^K$. 


\textbf{Update Rules.} Similar to~\secref{sec:inf}, we expand the ELBO in~\eqnref{eq:elbo} (see~\appref{sec:app_update_rules} for details) and set its derivative w.r.t. $\muu_k$ and $\Si_k$ to $\0$,
yielding the update rule for learning $\muu_k$ and $\Si_k$:
\begingroup\makeatletter\def\f@size{7.5}\check@mathfonts
\def\maketag@@@#1{\hbox{\m@th\large\normalfont#1}}
\begin{align}
    \muu_k =~&  \frac{\sum_{m,j}{\phi_{mjk}w_{mj} \e_{mj}}}{\sum_{m,j} \phi_{mjk}w_{mj}}, \nonumber\\
    \Si_k =~& \frac{\sum_{m,j}\phi_{mjk}w_{mj} (\e_{mj}-\muu_k)(\e_{mj}-\muu_k)^T}{\sum_{m,j} \phi_{mjk}w_{mj} }. \label{eq:update_mu_sigma}
\end{align}
\endgroup

\begin{algorithm}[h]
 \caption{Algorithm for VALC}\label{alg:valance}
 \textbf{Input:} Initialized $\{\gamm_m\}_{m=1}^M$, $\{\ph_m\}_{m=1}^M$, and $\{\Om_k\}_{k=1}^K$, documents $\{\DM_m\}_{m=1}^M$, number of epochs~T.
 
 \textbf{for} {$t=1:T$} \textbf{do}{
 
 \quad\textbf{for} {$m=1:M$} \textbf{do}{
 
 \quad\quad Update $\ph_m$ and $\gamm_m$ using \eqnref{eq:update_phi} and \eqnref{eq:update_gamma}, respectively.
 
 }
 \quad Update $\{\Om_k\}_{k=1}^K$ using \eqnref{eq:update_mu_sigma}.
 }
\end{algorithm}
\textbf{Effect of Attention Weights.} 
From \eqnref{eq:update_mu_sigma}, we can observe that the attention weight of the $j$'th word in document $m$, i.e., $a_{mj}$, affects the virtual continuous word count $w_{mj}$ (see~\secref{sec:valance}), thereby affecting the update of the dataset-level concept center $\muu_k$ and covariance $\Si_k$. 
Specifically, if we use attention-based weights with fixed length or variable length in~\secref{sec:valance}, the continuous word count $w_{mj}$ will be proportional to the attention weight $a_{mj}$. Therefore, when updating the concept center $\muu_k$ as a weighted average of different word embeddings $\e_{mj}$, VALC naturally places more focus on words with higher attention weights $a_{mj}$ from FLMs, thereby making the interpretations sharper (see~\secref{sec:interpret} for detailed results and~\appref{app:theory} for theoretical analysis). 


\section{Theoretical Analysis} \label{sec:concept_theory}

{In this section, we provide theoretical guarantees {of VALC on the four properties} in~\defref{def:concept}. 

\textbf{Multi-Level Structure.} 
As shown in~\algref{alg:valance}, VALC (1) learns the \emph{dataset-level} interpretation $\{\Om_k\}_{k=1}^K$ describing the $K$ concepts, (2) infers the distribution of \emph{document-level} interpretation $\tha_m$ for document $m$, i.e., $q(\tha_m|\gamm_m)$ {(parameterized by $\gamm_m$)}, and (3) infers the posterior distribution of \emph{word-level} concept index, i.e., $q(z_{mj}|\ph_{mj})$, parameterized by $\ph_{mj}$. Such three-level interpretations correspond to Property (1) in~\defref{def:concept}. 


\textbf{Normalization.} 
The {learned} variational distribution $q(\tha_m|\gamm_m)$ (described in \eqnref{eq:joint_vi}) is a Dirichlet distribution; therefore we have $\sum\nolimits_{k=1}^{K} \theta_{mk}  = 1$. The update of $\ph_{mj}$ (\eqnref{eq:update_phi}) is naturally constrained by $\sum\nolimits_{k=1}^K \phi_{mjk}=1$ since $\ph_{mj}$ parameterizes a Categorical distribution (over $z_{mj}$). 

   
\begin{algorithm}[t]
 \caption{Algorithm for VALC Concept Editing }\label{alg:prune_valance}
 \textbf{Input:} FLM $f(\cdot)$, classifier $g(\cdot)$, classification loss $L$, document $\DM_m$ with $J_m$ words, labels $\y$, constant factor $\omega$.\\
 \textbf{for} {$j=1:J_m$} \textbf{do}{
\\\quad $\e_{mj} = f(\DM_{mj})$ 

\quad $\x^* = QP(\e_{mj}, \{\muu_k\}_{k=1}^{K})$ 

\quad $k^* = \arg \min L(g(\e_{mj} - \omega\cdot x^*_k \muu_{k}), y_m)$ 

\quad $\e_{mj} \leftarrow  \e_{mj} - \omega\cdot x^*_{k^*} \muu_{k^*}$
 }
\end{algorithm}
\textbf{Additivity}.} 
VALC is able to perform \emph{\textbf{Concept Editing}}, i.e, add/subtract the learned concept  
activation $\mu_k$ from FLMs via the following Quadratic Programming (QP) problem ($\x=[x_k]_{k=1}^K$):
\begin{align} \label{eq:qp}
	\min\nolimits_{\x \in \mathbb{R}^K} \quad & \Vert  \sum\nolimits_{k=1}^{K}x_k\muu_k -\e_m\Vert ^2,  \nonumber\\
	 \qquad\text{subject to}\quad & \x \geq \0   
	 \mbox{~~and~~}
           \sum\nolimits_{k=1}^K x_k = 1.
\end{align}
Given learned concepts $\{(\muu_k,\Si_k)\}_{k=1}^K$, VALC obtains this QP's optimal solution $\x^* \in \mathbb{R}^K$ and add/subtract any concept $k$ from arbitrary FLM embedding $\e_m$ by:
$
    \e_m \leftarrow \e_m \pm  x^*_{k} \muu_k.
$ 
\algref{alg:prune_valance} summarizes this {\emph{concept editing} process; one can also replace $\e_{mj}$ with the CLS embedding $\c_{m}$ for document-level editing (details in~\appref{app:document_edit}). 

\textbf{Mutual Information Maximization.} 
\thmref{thm:MI_bound} below shows that our inferred document-level and  word-level interpretation, $\tha_m$ and $\{\ph_{mj}\}_{j=1}^{J_m}$, satisfy Property (4), Mutual Information Maximization, in~\defref{def:concept}. 

\begin{theorem}[\textbf{Mutual Information Maximization}]
\label{thm:MI_bound}
In~\eqnref{eq:elbo}, the ELBO $\mathcal{L}(\gamm_m, \{\ph_{mj}\}_{j=1}^{J_m}; \alp, \{(\muu_k,\Si_k)\}_{k=1}^K)$ is upper bounded by the mutual information between contextual embeddings $\e_m$ and multi-level interpretation $\tha_m,\{\ph_{mj}\}_{j=1}^{J_m}$ in~\defref{def:concept}. Formally, with approximate posteriors $q(\tha_m | \gamm_m)$ and $q(z_{mj} | \ph_{mj})$, we have
\begin{align}\label{eq:MI_bound}
\mathcal{L}&(\gamm_m, \{\ph_{mj}\}_{j=1}^{J_m}; \alp, \{(\muu_k,\Si_k)\}_{k=1}^K) \nonumber\\
     \leq~& {I}(\e_m; \tha_m,\{z_{mj}\}_{j=1}^{J_m})-H(\e_m),
\end{align}
where the entropy term $H(\e_m)$ is a constant. 
\end{theorem}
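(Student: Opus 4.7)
}
My plan is to rewrite the ELBO into the standard ``reconstruction minus prior-KL'' form, throw away the non-negative prior-KL term, and then identify what remains as a quantity that is bounded above by the mutual information $I(\e_m;\tha_m,\{z_{mj}\}_{j=1}^{J_m})$ shifted by the (constant) marginal entropy $H(\e_m)$. Concretely, starting from \eqnref{eq:elbo} and grouping the terms involving the joint prior $p(\tha_m,\{z_{mj}\})=p(\tha_m\mid\alp)\prod_j p(z_{mj}\mid\tha_m)$, I would write
\begin{align*}
\mathcal{L}(q)
= \EB_q\!\left[\log p(\e_m\mid\tha_m,\{z_{mj}\})\right]
- \mathrm{KL}\!\left(q(\tha_m,\{z_{mj}\})\,\|\,p(\tha_m,\{z_{mj}\})\right).
\end{align*}
Since the KL divergence is non-negative, this already gives $\mathcal{L}(q)\le \EB_q[\log p(\e_m\mid\tha_m,\{z_{mj}\})]$, so the whole task reduces to controlling this expected log-likelihood by an information-theoretic quantity.

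Next, I would take expectations with respect to the data distribution $p(\e_m)$ on both sides (implicitly treating $\mathcal{L}$ as an expected ELBO, which is how VALC is actually trained across the corpus; $H(\e_m)$ then appears as the constant data entropy promised in the statement). Using the approximate posterior $q(\tha_m,\{z_{mj}\}\mid\e_m)$ defined in \eqnref{eq:joint_vi}, the joint $p(\e_m)\,q(\tha_m,\{z_{mj}\}\mid\e_m)$ induces a well-defined mutual information $I(\e_m;\tha_m,\{z_{mj}\})$. The key step is then a Barber--Agakov-style argument: for any candidate conditional $r(\e_m\mid\tha_m,\{z_{mj}\})$, one has $\EB[\log r(\e_m\mid\tha_m,\{z_{mj}\})]\le -H(\e_m\mid\tha_m,\{z_{mj}\})$ with equality iff $r$ matches the true conditional under the chosen joint. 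Applying this with $r=p(\e_m\mid z_{mj},\muu_{z_{mj}},\Si_{z_{mj}})$ (the Gaussian likelihood of VALC) yields
\begin{align*}
\EB_{p(\e_m)\,q}\!\left[\log p(\e_m\mid\tha_m,\{z_{mj}\})\right]
\;\le\; -H(\e_m\mid\tha_m,\{z_{mj}\})
\;=\; I(\e_m;\tha_m,\{z_{mj}\}) - H(\e_m),
\end{align*}
and chaining this with the earlier inequality gives the claimed bound \eqnref{eq:MI_bound}.

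The main obstacle I anticipate is the bookkeeping in going from a per-document ELBO to a mutual-information statement: one has to make precise which joint distribution underlies the mutual information (the model joint $p$, the variational joint $p(\e_m)\,q$, or a hybrid), and then justify that $H(\e_m)$ is a constant independent of the variational parameters $\gamm_m,\{\ph_{mj}\}$ and of the concept parameters $\{(\muu_k,\Si_k)\}$. This is exactly the place where the Barber--Agakov inequality is needed, because a naive calculation that stays at the level of a single fixed $\e_m$ only yields the weaker bound $\mathcal{L}(q)\le \log p(\e_m)$. A secondary, mostly clerical, step will be verifying that dropping the KL to the Dirichlet--Categorical prior is valid (it clearly is, but this is the place where Property~(2) of \defref{def:concept} is implicitly used, since $q(\tha_m\mid\gamm_m)$ and $q(z_{mj}\mid\ph_{mj})$ must be proper normalized distributions for the KL to be non-negative).
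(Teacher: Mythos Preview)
Your proposal is correct and follows essentially the same route as the paper's proof: drop the non-negative prior-KL term, take an outer expectation over $p(\e_m)$, and then bound the expected log-likelihood by $I(\e_m;\beta)-H(\e_m)$ via a Barber--Agakov argument, where $\beta=(\tha_m,\{z_{mj}\})$. The paper carries out the last step by an explicit multiply-and-divide trick that introduces the conditional $q(\e_m\mid\beta)$ under the helper joint $s(\e_m,\beta)=p(\e_m)\,q(\beta\mid\e_m)$ and recognizes the leftover term as $-\EB_{q(\beta)}[\mathrm{KL}(q(\e_m\mid\beta)\|p(\e_m\mid\beta))]\le 0$; this is exactly the Barber--Agakov inequality unpacked, and your identification of the relevant joint (the ``variational joint $p(\e_m)\,q$'') together with the Gaussian likelihood as the variational decoder $r$ matches the paper's argument in more standard language.
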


From \thmref{thm:MI_bound} we can see that maximizing the ELBO in~\eqnref{eq:elbo} is equivalent to maximizing the mutual information between our document-level/word-level concepts and the observed contextual embeddings in FLMs ({{proof in}~\appref{app:proof}}).

In summary, VALC enjoys all four properties in~\defref{def:concept} and therefore generates the optimal conceptual interpretation for FLMs. In contrast, state-of-the-art methods only satisfy a small part of them (\tabref{table:property_compare} and~\secref{sec:exp_property}). 
In~\appref{app:theory}, we provide theoretical guarantees that (1) under mild assumptions our VALC can learn better conceptual interpretations for FLMs for in noisy data and (2) 
attention-based schemes is superior to the identical scheme ({described in} \secref{sec:valance}). 

\section{Experiments}

        

\subsection{Experiment Setup}\label{sec:setup}

\textbf{Datasets.} 
We use three datasets in
our experiments, namely 20 Newsgroups, M10~\citep{lim2015m10}, and BBC News~\citep{greene2006bbcnews}. {For preprocessing details, see~\appref{app:experiment}.}



\textbf{Baselines.} 
We compare our method with 
the following state-of-the-art baselines: 
\begin{itemize}[nosep,leftmargin=18pt]
\item 
{\textbf{SHAP and LIME}~\citep{lundberg2017shap,ribeiro2016lime} are interpretation methods that attribute importance scores to input features. In this paper, we use embeddings of `CLS' token as input to SHAP/LIME.}
\item \textbf{BERTopic}~\citep{grootendorst2020bertopic} is a clustering-based model that uses HDBSCAN~\citep{mcinnes2017hdbscan} to cluster sentence embeddings from BERT, performs Uniform Manifold Approximation Projection (UMAP)~\citep{mcinnes2018umap}, and then uses class-based TF-IDF (c-TF-IDF) to obtain words for each cluster. 
\item \textbf{CETopic}~\citep{zhang2022cetoic} is a clustering-based model that first uses UMAP to perform dimensionality reduction on BERT sentence embeddings, performs K-Means clustering~\citep{lloyd1982kmeans}, and then uses weighted word selection for each cluster.
\end{itemize}





\textbf{Evaluation Metric.} 
Inspired by~\citet{koh2020CBM}, we perform concept editing experiments to evaluate conceptual interpretation for FLMs; 
higher \emph{accuracy gain} after editing indicates better interpretation performance. 
{We leverage BERT-base-uncased~\citep{devlin2018bert} as the contextual embedding model, 
and use accuracy on the test set as our metric. {For a fair comparison, we adhered to the baseline methodologies (e.g., BERTopic and CETopic) by setting the number of concepts (topics)
$K$ to 100 across all datasets. This number was chosen as it strikes an effective balance between capturing adequate detail and avoiding model overfitting.} See~\appref{app:concept_edit} for more details.} 





We can perform concept editing on either input tokens or contextual embeddings of FLMs. 
Specifically, we can perform \emph{hard} concept editing for concept $k$ by directly removing tokens that belong concept $k$ (applicable for hard clustering methods such as our baselines); we could also perform \emph{soft} concept editing for concept $k$ by removing concept subspace vectors from contextual embeddings $\e_m$ (applicable for VALC using~\algref{alg:prune_valance}). 

{Following~\cite{lyu2024towards}, we conducted additional experiments to evaluate the faithfulness metric. The faithfulness metric is implemented as the accuracy score of predictions using logistic regression, with the inferred conceptual explanations as inputs. }
        

\subsection{Comparison on Four Properties in~\defref{def:concept}} \label{sec:exp_property}
\begin{table}[t]
      \caption{Comparing methods on the properties in~\defref{def:concept} (MIM: Mutual Information Maximization).}\label{table:property_compare}
        \footnotesize
      \centering
      \LARGE
      \resizebox{0.47\textwidth}{!}{
      \begin{tabular}{lcccc}
   
        \toprule
        Model  & Multi-Level & Normalization    & Additivity & MIM    \\
        \midrule
        SHAP/LIME & No & No &  Partial &  No\\
        BERTopic & No & Hard & Partial & No \\
        
        CETopic & No & Hard &  Partial &  No\\
        VALC  & \textbf{Yes} & \textbf{Soft} & \textbf{Full} & \textbf{Yes} \\
        \bottomrule
      \end{tabular}
      }
\end{table}

In~\secref{sec:concept_theory} we show that VALC satisfies the four properties of conceptual interpretation in~\defref{def:concept}. In contrast, baseline models do not necessarily learn concepts that meet these requirements. 
\tabref{table:property_compare} summarizes the comparison between VALC and the baselines. We can see that VALC is superior to baselines in terms of the following four aspects: 
\begin{enumerate}[nosep,leftmargin=18pt]
    \item[(1)]\textbf{Multi-Level Structure.} 
    Baselines either apply clustering algorithms directly on the document-level embeddings from FLMs or assign importance scores to input features, and thus can only provide single-level interpretation, necessitating complex post-processing to generate dataset-level concepts. In contrast, VALC adopts an integrated approach, learning concepts at the dataset, document, and word level in a joint, end-to-end manner. 
    \item[(2)] \textbf{Normalization.} {BERTopic and CETopic} assign each word to exactly one concept and therefore satisfies \emph{hard}-normalization. SHAP/LIME produce importance scores that are not normalized. In contrast, VALC learns fractional concept interpretations $\gamm_m$ and $\ph_{mj}$ and therefore satisfies \emph{soft}-normalization, which is more flexible and intuitive. 
    
    \item[(3)] \textbf{Additivity.} 
    Baselines perform 
    addition or subtraction of concepts only at a single level (word/document), while our additivity and concept editing (\algref{alg:prune_valance}) work for both levels. 
    \item[(4)] \textbf{Mutual Information Maximization.} 
    Baselines {either} use a multi-step pipeline or produce importance scores; they are therefore prone to lose information between {FLM embeddings} and final clustering{/scoring} results. In contrast, VALC is theoretically guaranteed to maximally preserve information (\thmref{thm:MI_bound}). 
\end{enumerate}
        

\begin{table}[t]

      \caption{\textbf{Accuracy gain on 20 Newsgroups (20NG), M10, and BBC News (BBC) (\%).} We mark the best results with \textbf{bold face} and {the second best} with \underline{underline}.}
      \label{table:prune_dataset}
        \footnotesize
      \centering
      
      \resizebox{0.48\textwidth}{!}{
      \Large
      \begin{tabular}{c|l|c|cccc|c}
        \toprule
        \multirow{2}{*}{Dataset}&&\multirow{2}{*}{Unedited} &{SHAP}&\multirow{2}{*}{BERTopic}&\multirow{2}{*}{CETopic}&\multirow{2}{*}{VALC }& Finetune \\
        && &/LIME&&&&(Oracle)   \\
        \midrule
        \multirow{2}{*}{20NG}&Acc. & 51.26 & 61.74&  60.76   & \underline{61.93} & \bf{62.54} & 64.38  \\
        &Gain & -  & 10.48& 9.50   & \underline{10.67} & \bf{11.28} & 13.12 \\
        \midrule
        \multirow{2}{*}{M10}&Acc. &  69.74  & 75.60 &   76.79  &  \underline{79.18} & \bf{80.74} & 82.54  \\
        &Gain & -  & 5.86 & 7.05 & \underline{9.44} & \bf{11.00} & 12.80 \\
        \midrule
        \multirow{2}{*}{BBC}&Acc. & 93.72 & 95.96 & 95.52  &  \bf{96.86} & \underline{96.41} & 97.76  \\
        &Gain & - & 2.24 & 1.80 & \bf{3.14} & \underline{2.69} & 4.04 \\
        \bottomrule
      \end{tabular}
      }
       \vskip -0.5cm
\end{table}
\subsection{{Results}} \label{sec:exp_prune}
\textbf{Accuracy Gain.} 
We perform greedy concept editing~\citep{koh2020CBM} for BERTopic, CETopic, and our VALC to evaluate the quality of their learned concepts. Higher accuracy gain after pruning indicates better performance. 

\tabref{table:prune_dataset} show the results for different methods in three real-world datasets, where `Finetune (Oracle)' refers to finetuning both the backbone and the classifier of BERT. VALC's concept editing can improve the accuracy upon the unedited model by more than $11\%$ in 20 Newsgroups and M10, almost on par with `Finetune (Oracle)'. Compared with the baselines, VALC achieves the most accuracy gain in 20 Newsgroups and M10 and the second most accuracy gain in BBC News, demonstrating the effectiveness of VALC's four properties in~\defref{def:concept}. Note that SHAP and LIME both interpret the CLS token's embedding and therefore has identical accuracy gain (details in~\appref{app:concept_edit}).


\begin{figure*}[!t]
        \centering
        \includegraphics[width = 1.0\textwidth]{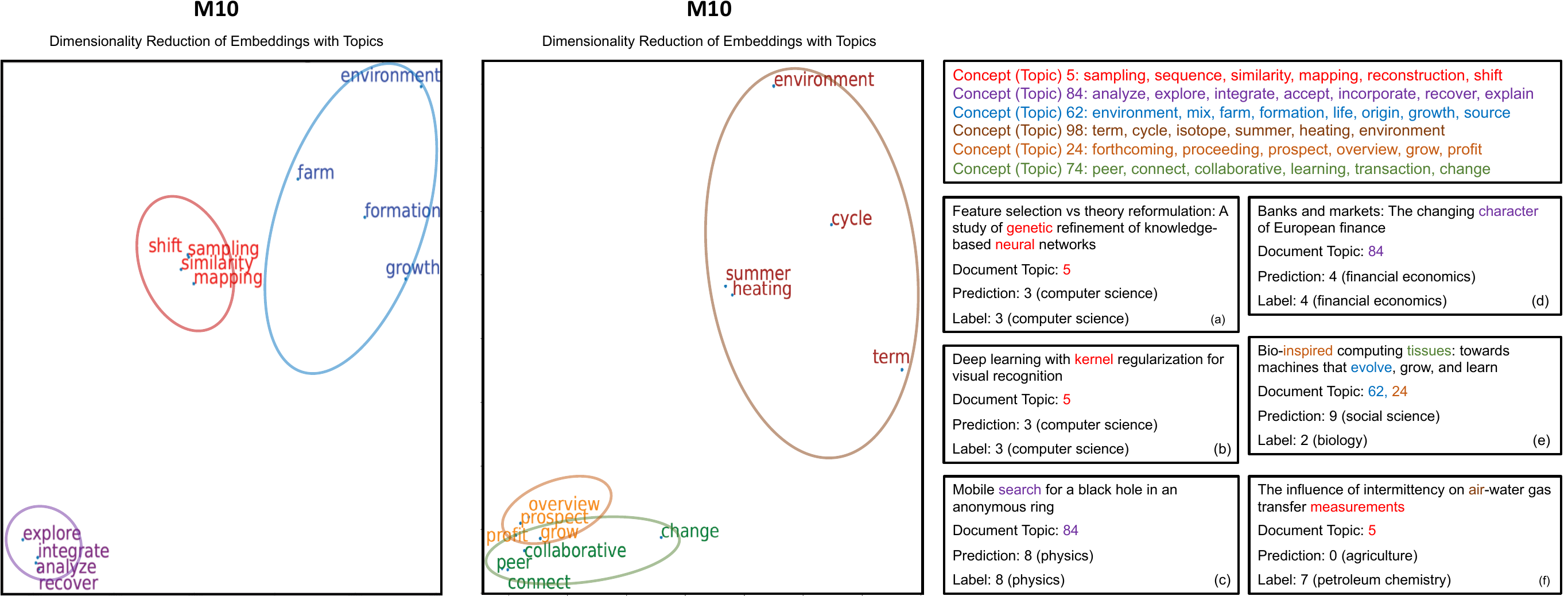}
        \vskip -0.3cm
        \caption[width = 0.8\linewidth]{Visualization of VALC's three-level conceptual interpretation. \textbf{Left and Middle:} Dataset-level interpretation with $6$ concepts' $\muu_k$ and $\Si_k$ with nearest word embeddings ($3$ concepts per plot for clarity). \textbf{Right:} Top words in each concept and $6$ example documents with the associated document-level and word-level interpretations. }
        \label{topic-m10}
\end{figure*}

\textbf{Ablation Study.} 
Thanks to its full additivity (\defref{def:concept}), VALC is capable of different concept editing schemes, including `Random', `Unweighted', and `Weighted'. Specifically, \emph{weighted} pruning uses the concept editing algorithm in~\algref{alg:prune_valance} with the optimal hyperparameter $\omega$; \emph{unweighted} pruning runs~\algref{alg:prune_valance} with $\omega=1$; \emph{random} pruning first randomly picks a {concept $k$ ($k\in \{1,...,K\}$)}, sets $\omega \cdot x_k=1/K$, and then runs~\algref{alg:prune_valance}.  
\begin{table}[t]
      \caption{\textbf{VALC Editing Accuracy (\%).} We mark the best results with \textbf{bold face}, {second best with \underline{underline}}. }\label{table:prune_ablation}
      \vskip -0.2cm
        \footnotesize
      \centering
      \resizebox{0.48\textwidth}{!}{
      \LARGE
      \begin{tabular}{l|c|ccc|c}
        \toprule
        \multirow{2}{*}{Dataset} &\multirow{2}{*}{Unedited} &\multirow{2}{*}{Random}&\multirow{2}{*}{Unweighted}&\multirow{2}{*}{Weighted}& Finetune \\
        & &&&&(Oracle)   \\
        \midrule
        20 Newsgroups & 51.26 &   51.13  & \underline{54.63} & \bf{62.54} & 64.38  \\
        M10 & 69.74 & 69.76  & \underline{73.56} & \bf{80.74} & 82.54 \\
        BBC News& 93.72 & 93.72 & \underline{95.52} & \bf{96.41} & 97.76 \\       
        \bottomrule
      \end{tabular}
      }
       \vskip -0.3cm
\end{table}
\tabref{table:prune_ablation} shows accuracy for VALC's different schemes. As expected, random pruning barely improves upon the unedited model. 
Unweighted pruning improves upon the unedited model by $1.5\sim 3.5\%$. 
Weighted pruning improves the accuracy by around $11\%$ upon the unedited model on 20 Newsgroups and M10. 

\textbf{{Faithfulness.}}
\tabref{tab:faith} shows the faithfulness of VALC and baselines on the 20 Newsgroups, M10, and BBC News datasets. These results show that our VALC significantly outperforms the baseline models, achieving the highest faithfulness accuracy scores in the 20 Newsgroups ($89.8\%$), M10 ($99.5\%$), and BBC News ($100.0\%$) datasets.

Note that the dataset size of 20 Newsgroups, M10, and BBC News is $16{,}309$, $8{,}355$, and $2{,}225$, respectively. BBC News contains significantly less data, making it easier to achieve a high faithfulness score. This explains why both CETopic and our VALC obtain a faithfulness score of $100.0\%$.

Baseline methods such as BERTopic and CETopic represent language concepts as discrete bags of words, which lack flexibility and accuracy. In contrast, VALC infers continuous concepts for datasets, documents, and words with theoretical guarantees. Consequently, it provides optimal and faithful conceptual explanations of high quality.

\begin{table}[t]
\centering
\vskip -0.05cm
\caption{Additional results for the faithfulness (in terms of accuracy percentage (\%)) of VALC and baselines on the 20 Newsgroups, M10, and BBC News datasets. We mark the best results with \textbf{bold face}.}
\resizebox{0.48\textwidth}{!}{
\begin{tabular}{lccccc}
\hline
{Methods} & {20 NG} & {M10} &{BBC} & Average 
\\ \hline
SHAP/LIME        & 5.8                    & 13.9         & 22.9  & 14.2       \\
BERTopic         & 17.2                    & 87.6         & 64.6 &   56.5     \\
CETopic          & 79.2                    & 96.4         & \bf{100.0} & 91.9       \\
\midrule
VALC             & \bf{89.8}          & \bf{99.5} & \bf{100.0} & \bf{96.4}
\\ \hline
\end{tabular}}
\label{tab:faith}
\vskip -0.5cm
\end{table}

{See~\appref{app:quantitative} for more quantitative results. 

\subsection{Conceptual Interpretation ({More for Different Tasks in~\appref{app:interpret_more}})}
\label{sec:interpret}
\textbf{Dataset-Level Interpretations.} 
As a case study, we train VALC on M10, sample $6$ concepts (topics) from the dataset, and plot the word embeddings of the top words (closest to the center $\muu_k$) in these concepts using PCA in \figref{topic-m10}(left and middle). 
We can observe \red{Concept 5} is mostly about \red{data analysis}, including words such as `sampling' and `similarity'. 
\purple{Concept 84} is mostly about \purple{reasoning}, with words `explore', 'accept', `explain', etc. \blue{Concept 62} is mostly about \blue{nature}, with words `environment', `formation', `growth', etc. \brown{Concept 98} is mostly about \brown{farming}, with words `term', `summer', `heating', etc. \orange{Concept 24} is mostly about \orange{economics}, with words `forthcoming', `prospect', `grow', etc. \green{Concept~74} is mostly about \green{social contact}, containing words such as `peer', `connect', and `collaborative'. 
Interestingly, \orange{Concept 24 (economics)} and \green{Concept~74 (social contact)} are both related to social science and are therefore closer to each other in \figref{topic-m10}(middle), while \brown{Concept 98 (farming)} is farther away, showing VALC's cability of capturing concept similarity. 

\textbf{Document-Level Interpretations.} 
\figref{topic-m10}(right) shows 
that VALC can provide conceptual interpretations on why correct or incorrect FLM predictions happen for specific documents. 
For example, document (e) belongs to class 2 (\textbf{biology}), but BERT misclassifies it as class 9 (\textbf{social science}); our VALC interprets that this is because document (e) involves \orange{Concept 24 (economics)}, which is related to \textbf{social science}. 
On the other hand, document (b) is related to machine learning and BERT correctly classifies it as class 3 (\textbf{computer science}); VALC interprets that this is because document (b) involves \red{Concept 5 (data analysis)}. 


\textbf{Word-Level Interpretations.} \figref{topic-m10}(right) also shows that VALC can interpret which words and what concepts of these words lead to specific FLM predictions. 
For example, document~(f) belongs to class 7 (\textbf{petroleum chemistry}), but BERT misclassifies it as class 0 (\textbf{agriculture}); VALC attributes this to the word `air', 
which belongs to \brown{Concept 98 (farming)}. 
For document~(b), VALC interprets that BERT correctly classifies it as class 3 (\textbf{computer science}) because the document contains the word `kernel' that belongs to \red{Concept 5 (data analysis)}.


 

\section{Conclusion}
{We address the challenge of multi-level interpretations for FLM predictions by defining conceptual interpretation and introducing VALC, the first method to infer such interpretations effectively. Empirical results are promising, and theoretical analysis confirms that VALC reliably produces optimal conceptual interpretations by our definition. }

\section{Limitations}
{Our proposed method assumes access to the hidden layers of Transformer-based models, and therefore can be naturally extended to Transformer-based models including RoBERTa~\cite{liu2019roberta}, DeBERTa~\cite{he2021deberta}, ALBERT~\cite{lan2019albert}, Electra~\cite{clark2020electra}, {and decoder-only models, such as GPTs~\cite{radford2019language,brown2020language}}. Although our VALC is initially designed for Transformer-based models, it is also generalizable to other architectures, such as Convolutional Neural Networks (CNNs)~\cite{lecun2015deep} and Long Short-Term Memory networks (LSTMs)~\cite{hochreiter1997long}, by simply setting identical attention weights. Future work may include extending VALC beyond Transformer variants and natural language applications. {However, many other foundation language models provided by proprietary sources may not expose their internal states, limiting the applicability of our method in such cases.}} 
\section{{Ethical Considerations}}
VALC, as the first to comprehensively interpret FLMs at the concept level, holds significant promise for advancing societal and technological progress. By elucidating the inner workings of these complex FLMs, we enable greater transparency and trust in AI systems, which is crucial for their widespread adoption. This transparency ensures that AI-driven decisions in critical areas such as healthcare, law, and finance are more explainable and accountable, thus safeguarding against biases and errors. Additionally, our VALC fosters enhanced collaboration between AI and human experts, as interpretable models can provide insights that are more easily understood and acted upon by domain specialists. This symbiotic relationship has the potential to accelerate innovation, improve decision-making processes, and ultimately lead to more ethical and equitable AI applications, thereby benefiting society at large.

\section{{Acknowledgements}}
We extend our sincere gratitude to Akshay Nambi and Tanuja Ganu from Microsoft Research for their invaluable insights and guidance, which significantly enhanced the quality of this work. We also express our deep appreciation to Microsoft Research AI \& Society Fellowship, NSF Grant IIS-2127918, NSF CAREER Award IIS-2340125, NIH Grant 1R01CA297832, and the Amazon Faculty Research Award for their generous support. This research is also supported by NSF National Artificial Intelligence Research Resource (NAIRR) Pilot and the Frontera supercomputer supported by the National Science Foundation (award NSF-OAC 1818253) at the Texas Advanced Computing Center (TACC) at The University of Texas at Austin. Additionally, we thank the anonymous reviewers and the area chair/senior area chair for their thoughtful feedback and for recognizing the significance and contributions of our research. Finally, we would like to thank the Center for AI Safety (CAIS) for providing the essential computing resources that enabled this work.

\bibliography{main}

\begin{thebibliography}{66}
\expandafter\ifx\csname natexlab\endcsname\relax\def\natexlab#1{#1}\fi

\bibitem[{Batmanghelich et~al.(2016)Batmanghelich, Saeedi, Narasimhan, and
  Gershman}]{batmanghelich2016nonparametric}
Kayhan Batmanghelich, Ardavan Saeedi, Karthik Narasimhan, and Sam Gershman.
  2016.
\newblock Nonparametric spherical topic modeling with word embeddings.
\newblock In \emph{Proceedings of the conference. Association for Computational
  Linguistics. Meeting}, volume 2016, page 537. NIH Public Access.

\bibitem[{Blei(2012)}]{blei2012probabilistic}
David~M Blei. 2012.
\newblock Probabilistic topic models.
\newblock \emph{Communications of the ACM}, 55(4):77--84.

\bibitem[{Blei and Lafferty(2006)}]{blei2006dynamic}
David~M Blei and John~D Lafferty. 2006.
\newblock Dynamic topic models.
\newblock In \emph{Proceedings of the 23rd international conference on Machine
  learning}, pages 113--120.

\bibitem[{Blei et~al.(2003)Blei, Ng, and Jordan}]{blei2003latent}
David~M Blei, Andrew~Y Ng, and Michael~I Jordan. 2003.
\newblock Latent dirichlet allocation.
\newblock \emph{the Journal of machine Learning research}, 3:993--1022.

\bibitem[{Brown(2020)}]{brown2020language}
Tom~B Brown. 2020.
\newblock Language models are few-shot learners.
\newblock \emph{arXiv preprint arXiv:2005.14165}.

\bibitem[{Bunk and Krestel(2018)}]{bunk2018welda}
Stefan Bunk and Ralf Krestel. 2018.
\newblock Welda: enhancing topic models by incorporating local word context.
\newblock In \emph{Proceedings of the 18th ACM/IEEE on Joint Conference on
  Digital Libraries}, pages 293--302.

\bibitem[{Card et~al.(2017)Card, Tan, and Smith}]{card2017neural}
Dallas Card, Chenhao Tan, and Noah~A Smith. 2017.
\newblock A neural framework for generalized topic models.
\newblock \emph{arXiv preprint arXiv:1705.09296}.

\bibitem[{Chang and Blei(2009)}]{chang2009relational}
Jonathan Chang and David Blei. 2009.
\newblock Relational topic models for document networks.
\newblock In \emph{Artificial intelligence and statistics}, pages 81--88. PMLR.

\bibitem[{Chen et~al.(2019)Chen, Li, Tao, Barnett, Rudin, and
  Su}]{chen2019looks}
Chaofan Chen, Oscar Li, Daniel Tao, Alina Barnett, Cynthia Rudin, and
  Jonathan~K Su. 2019.
\newblock This looks like that: deep learning for interpretable image
  recognition.
\newblock \emph{Advances in neural information processing systems}, 32.

\bibitem[{Clark et~al.(2020)Clark, Luong, Le, and Manning}]{clark2020electra}
Kevin Clark, Minh-Thang Luong, Quoc~V Le, and Christopher~D Manning. 2020.
\newblock Electra: Pre-training text encoders as discriminators rather than
  generators.
\newblock \emph{arXiv preprint arXiv:2003.10555}.

\bibitem[{Das et~al.(2015)Das, Zaheer, and Dyer}]{das2015gaussian}
Rajarshi Das, Manzil Zaheer, and Chris Dyer. 2015.
\newblock Gaussian lda for topic models with word embeddings.
\newblock In \emph{Proceedings of the 53rd Annual Meeting of the Association
  for Computational Linguistics and the 7th International Joint Conference on
  Natural Language Processing (Volume 1: Long Papers)}, pages 795--804.

\bibitem[{Devlin et~al.(2018)Devlin, Chang, Lee, and
  Toutanova}]{devlin2018bert}
Jacob Devlin, Ming-Wei Chang, Kenton Lee, and Kristina Toutanova. 2018.
\newblock Bert: Pre-training of deep bidirectional transformers for language
  understanding.
\newblock \emph{arXiv preprint arXiv:1810.04805}.

\bibitem[{Dieng et~al.(2020)Dieng, Ruiz, and Blei}]{dieng2020topic}
Adji~B Dieng, Francisco~JR Ruiz, and David~M Blei. 2020.
\newblock Topic modeling in embedding spaces.
\newblock \emph{Transactions of the Association for Computational Linguistics},
  8:439--453.

\bibitem[{Duan et~al.(2021)Duan, Wang, Chen, Wang, Chen, Li, Ren, and
  Zhou}]{duan2021sawtooth}
Zhibin Duan, Dongsheng Wang, Bo~Chen, Chaojie Wang, Wenchao Chen, Yewen Li, Jie
  Ren, and Mingyuan Zhou. 2021.
\newblock Sawtooth factorial topic embeddings guided gamma belief network.
\newblock In \emph{International Conference on Machine Learning}, pages
  2903--2913. PMLR.

\bibitem[{Greene and Cunningham(2006)}]{greene2006bbcnews}
Derek Greene and P{\'a}draig Cunningham. 2006.
\newblock Practical solutions to the problem of diagonal dominance in kernel
  document clustering.
\newblock In \emph{Proceedings of the 23rd international conference on Machine
  learning}, pages 377--384.

\bibitem[{Grootendorst(2020)}]{grootendorst2020bertopic}
Maarten Grootendorst. 2020.
\newblock Bertopic: Leveraging bert and c-tf-idf to create easily interpretable
  topics.
\newblock \emph{Zenodo, Version v0}, 9.

\bibitem[{He et~al.(2021)He, Liu, Gao, and Chen}]{he2021deberta}
Pengcheng He, Xiaodong Liu, Jianfeng Gao, and Weizhu Chen. 2021.
\newblock Deberta: Decoding-enhanced bert with disentangled attention.
\newblock In \emph{ICLR}.

\bibitem[{Hochreiter and Schmidhuber(1997)}]{hochreiter1997long}
Sepp Hochreiter and J{\"u}rgen Schmidhuber. 1997.
\newblock Long short-term memory.
\newblock \emph{Neural computation}, 9(8):1735--1780.

\bibitem[{Hoffman et~al.(2010)Hoffman, Bach, and Blei}]{hoffman2010online}
Matthew Hoffman, Francis Bach, and David Blei. 2010.
\newblock Online learning for latent dirichlet allocation.
\newblock \emph{advances in neural information processing systems},
  23:856--864.

\bibitem[{Hu et~al.(2012)Hu, Liu, Jiang, and Yang}]{hu2012latent}
Pengfei Hu, Wenju Liu, Wei Jiang, and Zhanlei Yang. 2012.
\newblock Latent topic model based on gaussian-lda for audio retrieval.
\newblock In \emph{Chinese Conference on Pattern Recognition}, pages 556--563.
  Springer.

\bibitem[{Jordan et~al.(1998)Jordan, Ghahramani, Jaakkola, and Saul}]{VI}
Michael~I Jordan, Zoubin Ghahramani, Tommi~S Jaakkola, and Lawrence~K Saul.
  1998.
\newblock An introduction to variational methods for graphical models.
\newblock In \emph{Learning in graphical models}, pages 105--161. Springer.

\bibitem[{Kim et~al.(2018)Kim, Wattenberg, Gilmer, Cai, Wexler, Viegas
  et~al.}]{kim2018tcav}
Been Kim, Martin Wattenberg, Justin Gilmer, Carrie Cai, James Wexler, Fernanda
  Viegas, et~al. 2018.
\newblock Interpretability beyond feature attribution: Quantitative testing
  with concept activation vectors (tcav).
\newblock In \emph{International conference on machine learning}, pages
  2668--2677. PMLR.

\bibitem[{Kingma and Ba(2014)}]{kingma14adam}
Diederik~P Kingma and Jimmy Ba. 2014.
\newblock Adam: A method for stochastic optimization.
\newblock \emph{arXiv preprint arXiv:1412.6980}.

\bibitem[{Koh et~al.(2020)Koh, Nguyen, Tang, Mussmann, Pierson, Kim, and
  Liang}]{koh2020CBM}
Pang~Wei Koh, Thao Nguyen, Yew~Siang Tang, Stephen Mussmann, Emma Pierson, Been
  Kim, and Percy Liang. 2020.
\newblock Concept bottleneck models.
\newblock In \emph{International Conference on Machine Learning}, pages
  5338--5348. PMLR.

\bibitem[{Lan et~al.(2019)Lan, Chen, Goodman, Gimpel, Sharma, and
  Soricut}]{lan2019albert}
Zhenzhong Lan, Mingda Chen, Sebastian Goodman, Kevin Gimpel, Piyush Sharma, and
  Radu Soricut. 2019.
\newblock Albert: A lite bert for self-supervised learning of language
  representations.
\newblock \emph{arXiv preprint arXiv:1909.11942}.

\bibitem[{LeCun et~al.(2015)LeCun, Bengio, and Hinton}]{lecun2015deep}
Yann LeCun, Yoshua Bengio, and Geoffrey Hinton. 2015.
\newblock Deep learning.
\newblock \emph{nature}, 521(7553):436--444.

\bibitem[{Lewis et~al.(2019)Lewis, Liu, Goyal, Ghazvininejad, Mohamed, Levy,
  Stoyanov, and Zettlemoyer}]{lewis2019bart}
Mike Lewis, Yinhan Liu, Naman Goyal, Marjan Ghazvininejad, Abdelrahman Mohamed,
  Omer Levy, Ves Stoyanov, and Luke Zettlemoyer. 2019.
\newblock Bart: Denoising sequence-to-sequence pre-training for natural
  language generation, translation, and comprehension.
\newblock \emph{arXiv preprint arXiv:1910.13461}.

\bibitem[{Lim and Buntine(2015)}]{lim2015m10}
Kar~Wai Lim and Wray Buntine. 2015.
\newblock Bibliographic analysis with the citation network topic model.
\newblock In \emph{Asian conference on machine learning}, pages 142--158. PMLR.

\bibitem[{Liu et~al.(2019)Liu, Ott, Goyal, Du, Joshi, Chen, Levy, Lewis,
  Zettlemoyer, and Stoyanov}]{liu2019roberta}
Yinhan Liu, Myle Ott, Naman Goyal, Jingfei Du, Mandar Joshi, Danqi Chen, Omer
  Levy, Mike Lewis, Luke Zettlemoyer, and Veselin Stoyanov. 2019.
\newblock Roberta: A robustly optimized bert pretraining approach.
\newblock \emph{arXiv preprint arXiv:1907.11692}.

\bibitem[{Lloyd(1982)}]{lloyd1982kmeans}
Stuart Lloyd. 1982.
\newblock Least squares quantization in pcm.
\newblock \emph{IEEE transactions on information theory}, 28(2):129--137.

\bibitem[{Lundberg and Lee(2017)}]{lundberg2017shap}
Scott~M Lundberg and Su-In Lee. 2017.
\newblock A unified approach to interpreting model predictions.
\newblock \emph{Advances in neural information processing systems}, 30.

\bibitem[{Lyu et~al.(2024)Lyu, Apidianaki, and Callison-Burch}]{lyu2024towards}
Qing Lyu, Marianna Apidianaki, and Chris Callison-Burch. 2024.
\newblock Towards faithful model explanation in nlp: A survey.
\newblock \emph{Computational Linguistics}, pages 1--70.

\bibitem[{Mao et~al.(2022)Mao, Xia, Wang, Wang, Yang, Bareinboim, and
  Vondrick}]{CausalTrans}
Chengzhi Mao, Kevin Xia, James Wang, Hao Wang, Junfeng Yang, Elias Bareinboim,
  and Carl Vondrick. 2022.
\newblock Causal transportability for visual recognition.
\newblock In \emph{CVPR}.

\bibitem[{Mcauliffe and Blei(2007)}]{mcauliffe2007supervised}
Jon Mcauliffe and David Blei. 2007.
\newblock Supervised topic models.
\newblock \emph{Advances in neural information processing systems}, 20.

\bibitem[{McInnes and Healy(2017)}]{mcinnes2017hdbscan}
Leland McInnes and John Healy. 2017.
\newblock Accelerated hierarchical density based clustering.
\newblock In \emph{2017 IEEE International Conference on Data Mining Workshops
  (ICDMW)}, pages 33--42. IEEE.

\bibitem[{McInnes et~al.(2018)McInnes, Healy, and Melville}]{mcinnes2018umap}
Leland McInnes, John Healy, and James Melville. 2018.
\newblock Umap: Uniform manifold approximation and projection for dimension
  reduction.
\newblock \emph{arXiv preprint arXiv:1802.03426}.

\bibitem[{Meng et~al.(2022)Meng, Zhang, Huang, Zhang, and Han}]{meng2022topic}
Yu~Meng, Yunyi Zhang, Jiaxin Huang, Yu~Zhang, and Jiawei Han. 2022.
\newblock Topic discovery via latent space clustering of pretrained language
  model representations.
\newblock In \emph{Proceedings of the ACM Web Conference 2022}, pages
  3143--3152.

\bibitem[{Mikolov et~al.(2013)Mikolov, Chen, Corrado, and
  Dean}]{mikolov2013efficient}
Tomas Mikolov, Kai Chen, Greg Corrado, and Jeffrey Dean. 2013.
\newblock Efficient estimation of word representations in vector space.
\newblock \emph{arXiv preprint arXiv:1301.3781}.

\bibitem[{Oord et~al.(2017)Oord, Vinyals, and Kavukcuoglu}]{VQVAE}
Aaron van~den Oord, Oriol Vinyals, and Koray Kavukcuoglu. 2017.
\newblock Neural discrete representation learning.
\newblock \emph{arXiv preprint arXiv:1711.00937}.

\bibitem[{Paranjape et~al.(2020)Paranjape, Joshi, Thickstun, Hajishirzi, and
  Zettlemoyer}]{paranjape2020information}
Bhargavi Paranjape, Mandar Joshi, John Thickstun, Hannaneh Hajishirzi, and Luke
  Zettlemoyer. 2020.
\newblock An information bottleneck approach for controlling conciseness in
  rationale extraction.
\newblock \emph{arXiv preprint arXiv:2005.00652}.

\bibitem[{Paszke et~al.(2019)Paszke, Gross, Massa, Lerer, Bradbury, Chanan,
  Killeen, Lin, Gimelshein, Antiga et~al.}]{paszke19pytorch}
Adam Paszke, Sam Gross, Francisco Massa, Adam Lerer, James Bradbury, Gregory
  Chanan, Trevor Killeen, Zeming Lin, Natalia Gimelshein, Luca Antiga, et~al.
  2019.
\newblock Pytorch: An imperative style, high-performance deep learning library.
\newblock In \emph{Advances in Neural Information Processing Systems}, pages
  8024--8035.

\bibitem[{Peinelt et~al.(2020)Peinelt, Nguyen, and Liakata}]{peinelt2020tbert}
Nicole Peinelt, Dong Nguyen, and Maria Liakata. 2020.
\newblock tbert: Topic models and bert joining forces for semantic similarity
  detection.
\newblock In \emph{Proceedings of the 58th Annual Meeting of the Association
  for Computational Linguistics}, pages 7047--7055.

\bibitem[{Portes et~al.(2023)Portes, Trott, Havens, King, Venigalla, Nadeem,
  Sardana, Khudia, and Frankle}]{portes2023mosaicbert}
Jacob Portes, Alexander~R Trott, Sam Havens, Daniel King, Abhinav Venigalla,
  Moin Nadeem, Nikhil Sardana, Daya Khudia, and Jonathan Frankle. 2023.
\newblock Mosaicbert: How to train bert with a lunch money budget.
\newblock In \emph{Workshop on Efficient Systems for Foundation Models@
  ICML2023}.

\bibitem[{Radford et~al.(2019)Radford, Wu, Child, Luan, Amodei, Sutskever
  et~al.}]{radford2019language}
Alec Radford, Jeffrey Wu, Rewon Child, David Luan, Dario Amodei, Ilya
  Sutskever, et~al. 2019.
\newblock Language models are unsupervised multitask learners.
\newblock \emph{OpenAI blog}, 1(8):9.

\bibitem[{Ren et~al.(2023)Ren, Li, Chen, Deng, and Zhang}]{ren2023defining}
Jie Ren, Mingjie Li, Qirui Chen, Huiqi Deng, and Quanshi Zhang. 2023.
\newblock Defining and quantifying the emergence of sparse concepts in dnns.
\newblock In \emph{Proceedings of the IEEE/CVF Conference on Computer Vision
  and Pattern Recognition}, pages 20280--20289.

\bibitem[{Ribeiro et~al.(2016)Ribeiro, Singh, and Guestrin}]{ribeiro2016lime}
Marco~Tulio Ribeiro, Sameer Singh, and Carlos Guestrin. 2016.
\newblock " why should i trust you?" explaining the predictions of any
  classifier.
\newblock In \emph{Proceedings of the 22nd ACM SIGKDD international conference
  on knowledge discovery and data mining}, pages 1135--1144.

\bibitem[{Schrouff et~al.(2021)Schrouff, Baur, Hou, Mincu, Loreaux, Blanes,
  Wexler, Karthikesalingam, and Kim}]{schrouff2021best}
Jessica Schrouff, Sebastien Baur, Shaobo Hou, Diana Mincu, Eric Loreaux, Ralph
  Blanes, James Wexler, Alan Karthikesalingam, and Been Kim. 2021.
\newblock Best of both worlds: local and global explanations with
  human-understandable concepts.
\newblock \emph{arXiv preprint arXiv:2106.08641}.

\bibitem[{Schulz et~al.(2020)Schulz, Sixt, Tombari, and
  Landgraf}]{schulz2020restricting}
Karl Schulz, Leon Sixt, Federico Tombari, and Tim Landgraf. 2020.
\newblock Restricting the flow: Information bottlenecks for attribution.
\newblock \emph{arXiv preprint arXiv:2001.00396}.

\bibitem[{Shi et~al.(2021)Shi, Zhang, Wang, and Reddy}]{shi2021corpus}
Tian Shi, Xuchao Zhang, Ping Wang, and Chandan~K Reddy. 2021.
\newblock Corpus-level and concept-based explanations for interpretable
  document classification.
\newblock \emph{ACM Transactions on Knowledge Discovery from Data (TKDD)},
  16(3):1--17.

\bibitem[{Terragni et~al.(2021)Terragni, Fersini, Galuzzi, Tropeano, and
  Candelieri}]{terragni2021octis}
Silvia Terragni, Elisabetta Fersini, Bruno~Giovanni Galuzzi, Pietro Tropeano,
  and Antonio Candelieri. 2021.
\newblock Octis: comparing and optimizing topic models is simple!
\newblock In \emph{Proceedings of the 16th Conference of the European Chapter
  of the Association for Computational Linguistics: System Demonstrations},
  pages 263--270.

\bibitem[{Wang et~al.(2018)Wang, Singh, Michael, Hill, Levy, and
  Bowman}]{wang2018glue}
Alex Wang, Amanpreet Singh, Julian Michael, Felix Hill, Omer Levy, and Samuel~R
  Bowman. 2018.
\newblock Glue: A multi-task benchmark and analysis platform for natural
  language understanding.
\newblock \emph{arXiv preprint arXiv:1804.07461}.

\bibitem[{Wang et~al.(2012)Wang, Blei, and Heckerman}]{wang2012continuous}
Chong Wang, David Blei, and David Heckerman. 2012.
\newblock Continuous time dynamic topic models.
\newblock \emph{arXiv preprint arXiv:1206.3298}.

\bibitem[{Wang et~al.(2022)Wang, Guo, Zhao, Zheng, Tanwisuth, Chen, and
  Zhou}]{wang2022mixtures}
Dongsheng Wang, Dandan Guo, He~Zhao, Huangjie Zheng, Korawat Tanwisuth,
  Bo~Chen, and Mingyuan Zhou. 2022.
\newblock Representing mixtures of word embeddings with mixtures of topic
  embeddings.
\newblock \emph{arXiv preprint arXiv:2203.01570}.

\bibitem[{Wang et~al.(2016)Wang, Xingjian, and Yeung}]{NPN}
Hao Wang, SHI Xingjian, and Dit-Yan Yeung. 2016.
\newblock Natural-parameter networks: A class of probabilistic neural networks.
\newblock In \emph{NIPS}, pages 118--126.

\bibitem[{Wang and Yeung(2016)}]{BDL}
Hao Wang and Dit-Yan Yeung. 2016.
\newblock Towards bayesian deep learning: A framework and some existing
  methods.
\newblock \emph{TDKE}, 28(12):3395--3408.

\bibitem[{Wang and Yeung(2020)}]{BDLSurvey}
Hao Wang and Dit-Yan Yeung. 2020.
\newblock A survey on bayesian deep learning.
\newblock \emph{CSUR}, 53(5):1--37.

\bibitem[{Wang et~al.(2024)Wang, Tan, and Wang}]{PACE}
Hengyi Wang, Shiwei Tan, and Hao Wang. 2024.
\newblock Probabilistic conceptual explainers: Towards trustworthy conceptual
  explanations for vision foundation models.
\newblock In \emph{ICML}.

\bibitem[{Xie et~al.(2023)Xie, Vosoughi, and Hassanpour}]{xie2023proto}
Sean Xie, Soroush Vosoughi, and Saeed Hassanpour. 2023.
\newblock Proto-lm: A prototypical network-based framework for built-in
  interpretability in large language models.
\newblock \emph{arXiv preprint arXiv:2311.01732}.

\bibitem[{Xing et~al.(2017)Xing, Wu, Wu, Liu, Huang, Zhou, and
  Ma}]{xing2017topic}
Chen Xing, Wei Wu, Yu~Wu, Jie Liu, Yalou Huang, Ming Zhou, and Wei-Ying Ma.
  2017.
\newblock Topic aware neural response generation.
\newblock In \emph{Proceedings of the AAAI Conference on Artificial
  Intelligence}, volume~31.

\bibitem[{Xu et~al.(2023)Xu, Hao, He, and Wang}]{VDI}
Zihao Xu, Guangyuan Hao, Hao He, and Hao Wang. 2023.
\newblock Domain indexing variational bayes: Interpretable domain index for
  domain adaptation.
\newblock In \emph{ICLR}.

\bibitem[{Yan and Wang(2023)}]{CounTS}
Jingquan Yan and Hao Wang. 2023.
\newblock Self-interpretable time series prediction with counterfactual
  explanations.
\newblock In \emph{ICML}.

\bibitem[{Yang et~al.(2023)Yang, Panagopoulou, Zhou, Jin, Callison-Burch, and
  Yatskar}]{yang2023language}
Yue Yang, Artemis Panagopoulou, Shenghao Zhou, Daniel Jin, Chris
  Callison-Burch, and Mark Yatskar. 2023.
\newblock \href {http://arxiv.org/abs/2211.11158} {Language in a bottle:
  Language model guided concept bottlenecks for interpretable image
  classification}.

\bibitem[{Yang et~al.(2019)Yang, Dai, Yang, Carbonell, Salakhutdinov, and
  Le}]{yang2019xlnet}
Zhilin Yang, Zihang Dai, Yiming Yang, Jaime Carbonell, Russ~R Salakhutdinov,
  and Quoc~V Le. 2019.
\newblock Xlnet: Generalized autoregressive pretraining for language
  understanding.
\newblock \emph{Advances in neural information processing systems}, 32.

\bibitem[{Yuksekgonul et~al.(2023)Yuksekgonul, Wang, and
  Zou}]{yuksekgonul2023posthoc}
Mert Yuksekgonul, Maggie Wang, and James Zou. 2023.
\newblock \href {http://arxiv.org/abs/2205.15480} {Post-hoc concept bottleneck
  models}.

\bibitem[{Zhang et~al.(2022)Zhang, Fang, Chen, and
  Namazi-Rad}]{zhang2022cetoic}
Zihan Zhang, Meng Fang, Ling Chen, and Mohammad-Reza Namazi-Rad. 2022.
\newblock Is neural topic modelling better than clustering? an empirical study
  on clustering with contextual embeddings for topics.
\newblock \emph{arXiv preprint arXiv:2204.09874}.

\bibitem[{Zhao et~al.(2020)Zhao, Phung, Huynh, Le, and
  Buntine}]{zhao2020optimal}
He~Zhao, Dinh Phung, Viet Huynh, Trung Le, and Wray Buntine. 2020.
\newblock Neural topic model via optimal transport.
\newblock \emph{arXiv preprint arXiv:2008.13537}.

\end{thebibliography}

\appendix
\onecolumn

\section{Details on Learning VALC} \label{sec:app_update_rules}
\textbf{Update Rules.} Similar to Sec. 3.4.1 of the main paper, we expand the ELBO in~\eqnref{eq:elbo} of the main paper, take its derivative w.r.t. $\muu_k$ and set it to $\0$:  
\begin{align}
    \frac{\partial L}{\partial \muu_k} =\sum_{m,j}\phi_{mjk}w_{mj} \Si_{k}^{-1} (\e_{mj}-\muu_k) = 0,
\end{align}
yielding the update rule for learning $\muu_i$:
\begin{align}
    \muu_k &=  \frac{\sum_{m,j}{\phi_{mjk}w_{mj} \e_{mj}}}{\sum_{m,j} \phi_{mjk}w_{mj}},\label{eq:update_mu_app}
\end{align}
where $\Si_{k}^{-1}$ is canceled out. 
Similarly, setting the derivatives w.r.t. $\Si$ to $\0$, i.e., 
 \begin{align}
    \frac{\partial L}{\partial \Si_k} 
    = \frac{1}{2}\sum_{m,j} \phi_{mjk}w_{mj}(-\Si_k^{-1}
    +\Si_k^{-1}(\e_{mj}-\muu_k)(\e_{mj}-\muu_k)^T\Si_k^{-1}),
 \end{align}
we have
\begin{align}
     \Si_k &= \frac{\sum_{m,j}\phi_{mjk}w_{mj} (\e_{mj}-\muu_k)(\e_{mj}-\muu_k)^T}{\sum_{m,j} \phi_{mjk}w_{mj} }. \label{eq:update_sigma_app}
\end{align}
\begin{figure}[h]
        \vskip -0.0cm
        \centering
        \includegraphics[width = 0.4\textwidth]
        {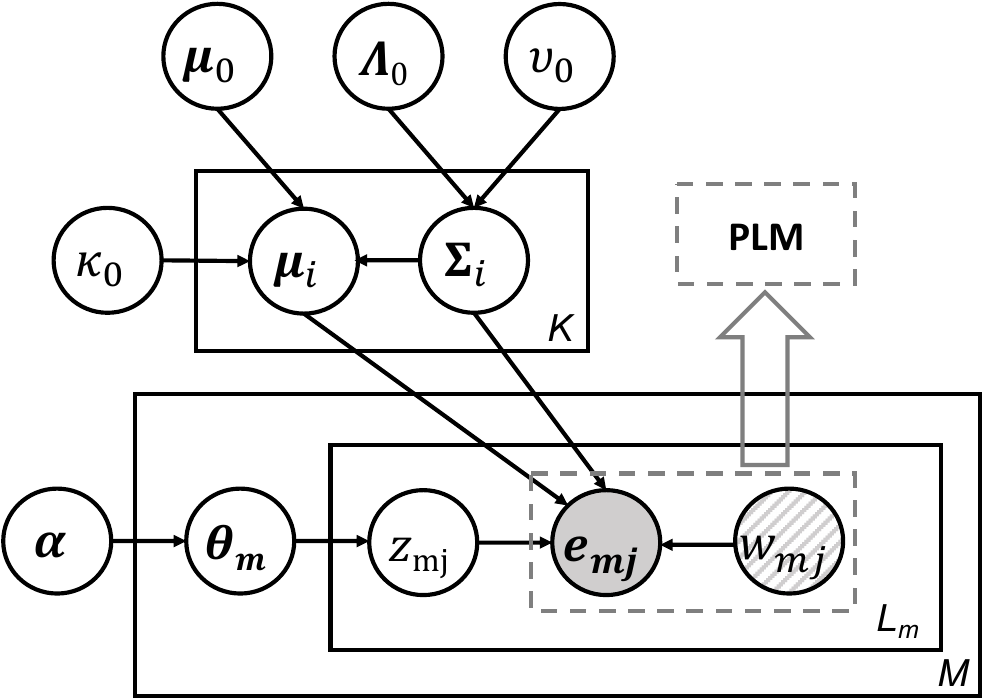}
        \vskip -0.3cm
        \caption{Probabilistic graphical model of smoothed VALC. }
        \label{fig:pgm_smoothed}
       \vskip -0.5cm
\end{figure}

\textbf{Smoothing with Prior Distributions on $\{(\muu_k,\Si_k)\}_{k=1}^K$.} 
To alleviate overfitting and prevent singularity in numerical computation, we impose priors distributions on $\muu_k$ and $\Si_k$ to smooth the learning process (\figref{fig:pgm_smoothed}). Specifically, we use a Normal-Inverse-Wishart prior on $\muu_i$ and $\Si_i$: 

\begin{align*}
    \Si_k &\sim \mathcal{IW}(\Lam_0,\nu_0),\\
    \muu_k|\Si_k &\sim \mathcal{N}(\muu_0,\Si_k/\kappa_0),
    \end{align*}
where $\Lam_0$, $\nu_0$, $\muu_0$, and $\kappa_0$ are hyperparameters for the prior distributions. 
Taking the expectations of $\muu_k$ and $\Si_k$ over the posterior distibution $\mathcal{NIW}(\muu_k,\Si_k|\muu_k^{(n)},\Lam_k^{(n)},\kappa_k^{(n)},\nu_k^{(n)})$, we have the update rules as:
\begin{align}
    \muu_k &\leftarrow \EB_{\mathcal{NIW}}[\muu_k] 
    = \frac{\kappa_0\muu_0+n_k\tilde{\muu}_k}{\kappa_0+n_k},\label{eq:update_smooth_mu}\\
    \Si_k &\leftarrow \EB_{\mathcal{NIW}}[\Si_k] 
    = \frac{\Lam_0+\S_k + \frac{\kappa_0 n_k}{\kappa_0+n_k}(\tilde{\muu}_k-\muu_0)(\tilde{\muu}_k-\muu_0)^T}{\nu_0+n_k-K-1},\label{eq:update_smooth_sigma}\\
    \S_k &= \sum\nolimits_{m,j}\phi_{mjk}w_{mj} (\e_{mj}-\tilde{\muu}_k)(\e_{mj}-\tilde{\muu}_k)^T.
\end{align}
where $n_k=\sum_{m,j}\phi_{mjk}w_{mj}$ is the total virtual word counts used to estimate $\muu_k$ and $\Si_k$. \eqnref{eq:update_smooth_mu} and~\eqnref{eq:update_smooth_sigma} are the smoothed version of~\eqnref{eq:update_mu_sigma} of the main paper. From the Bayesian perfective, they correspond to the expectations of $\muu_k$'s and $\Si_k$'s posterior distributions. \algref{alg:valance} of the main paper summarizes the learning of VALC. 

\textbf{Online Learning of $\muu_k$ and $\Si_k$.} 
Note that FLMs are deep neural networks trained using minibatches of data, while~\eqnref{eq:update_smooth_mu} and~\eqnref{eq:update_smooth_sigma} need to go through the whole dataset before each update. Inspired by~\citet{hoffman2010online,VQVAE}, we using exponential moving average (EMA) to work with minibatchs. Specifically, we update them as: 
\begin{align*}
\muu_k &\leftarrow \rho \cdot N \cdot\muu_k + (1-\rho)\cdot B\cdot\tilde{\muu}_{k},\\
\Si_k &\leftarrow \rho \cdot N \cdot\Si_k + (1-\rho)\cdot B\cdot\tilde{\Si}_{k},\\
N &\leftarrow \rho \cdot N  + (1-\rho)\cdot B,\\
\muu_k &\leftarrow \frac{\muu_k}{N}, \quad \Si_k \leftarrow \frac{\Si_k}{N},
\end{align*}
where $B$ is the minibatch size, $N$ is a running count, and $\rho\in(0,1)$ is the momentum hyperparameter. $\tilde{\muu}_{k}$ and $\tilde{\Si}_{k}$ are the updated $\muu_k$ and $\Si_k$ after applying~\eqnref{eq:update_smooth_mu} and~\eqnref{eq:update_smooth_sigma} only on the \emph{current minibatch}.

\textbf{Effect of Attention Weights.}
{Interestingly, we also observe that FLMs' attention weights on stop words such as `the' and `a' tend to be much lower; therefore VALC can naturally ignore these concept-irrelevant stop words when learning and inferring concepts (as discussed in~\secref{sec:learn}). This is in contrast to typical topic models~\citep{blei2003latent,blei2012probabilistic} that require preprocessing to remove stop words.}

\textbf{Phrase-Level Interpretations.} 
{We can easily infer phrase-level concepts from word-level concepts by treating phrases as sub-documents and adapting~\eqnref{eq:update_gamma} (which provides document-level concepts) in the paper. Specifically, suppose for a given phrase spanning from the $r$-th word to the $s$-th word in document $m$, we can adapt~\eqnref{eq:update_gamma} to provide phrase-level conceptual explanations as $\gamma_{mk}^{(r,s)} = \alpha_k + \sum_{j=r}^s \phi_{mjk} w_{mj}$. Here $\gamma_{mk}^{(r,s)}$ is the strength of concept $k$ for the given phrase in document $m$. 
In this way, $\gamma_{mk}^{(r,s)}$ can serve as the phrase-level concept explanation of the phrase spanning from $r$-th word to the $s$-th word; this is another interesting complementary sub-document-level concept explanation between the word level and the document level. }
\section{{Interpretation of the ELBO}}\label{sec:app_elbo_expansion}
{VALC's evidence lower bound (ELBO), i.e.,~\eqnref{eq:elbo} in the paper, is}  
\begin{align}\label{eq:elbo_copy}
    \mathcal{L}(\gamm_m, \{\ph_{m[1:J_m]}\}; \alpha, \{(\muu_{[1:K]},\Si_{[1:K]})\}) 
    & =\mathbb{E}_q[\log p(\tha_m|\alpha)]+ \sum\nolimits_{j=1}^{J_m}\mathbb{E}_q[\log p(\z_{mj}|\tha_m)]\nonumber\\
   &+ \sum\nolimits_{j=1}^{J_m}\mathbb{E}_q[\log p(\e_{mj}|\z_{mj},\mu_{\z_{mj}}, \Sigma_{\z_{mj}})] \nonumber\\
     &- \mathbb{E}_q[\log q(\tha_m)] - \sum\nolimits_{j=1}^{J_m}\mathbb{E}_q[\log q(\z_{mj})].
\end{align}

{\textbf{Derivation of the Evidence Lower Bound.} We derive the evidence lower bound by computing the log likelihood of each term. For example, by definition,
$p(\e_{mj}|\z_{mj},\mu_{\z_{mj}}, \Sigma_{\z_{mj}})=[\mathcal{N}(\e_{mj};\muu_{mj},\Si_{mj})]^{w_{mj}}$, where $\mathcal{N}(\cdot)$ is the Gaussian distribution. Then we derive the third term $\sum_{j=1}^{J_m}\mathbb{E_{q}}[\log p(\e_{mj}|\z_{mj},\mu_{\z_{mj}}, \Sigma_{\z_{mj}})]$ in~\eqnref{eq:elbo_copy} as follows:}
\begin{align}
    \mathbb{E}_q[\log p(\e_{mj}|\z_{mj},\mu_{\z_{mj}}, \Si_{\z_{mj}})]
    &=\sum_{k} \phi_{mjk}w_{mj}\log\mathcal{N}(\e_{mj}|\muu_k,\Si_k)\nonumber\\
    &=\sum_{k}\phi_{mjk} w_{mj} \{-\frac{1}{2}(\e_{mj}-\muu_k)^T\Si_k^{-1}(\e_{mj}-\muu_k) \nonumber\\&-\log[(2\pi)^{d/2} \vert \Si_k\vert^{1/2}]\}.
\end{align}

\textbf{{Expanding the ELBO to the Loss Function.}} We can expand the ELBO in~\eqnref{eq:elbo} of the main paper as:
\begingroup\makeatletter\def\f@size{10}\check@mathfonts
    \begin{align}\label{eq:elbo_app}
    \mathcal{L}(\gamm,\ph;\alp,\{\muu\}_{k=1}^K,\{\Si\}_{k=1}^K) =&  \log\Gam(\sum_{k=1}^K\alpha_k)-\sum_{k=1}^K\log\Gam(\alpha_k) + 
    \sum_{k=1}^K(\alpha_k-1)(\Psi(\gamm_k)-\Psi(\sum_{k'=1}^K\gamm_{k'}))\nonumber\\
    &+\sum_{j=1}^{J}\sum_{k=1}^K\phi_{jk}(\Psi(\gamm_k)-\Psi(\sum_{k'=1}^K\gamm_{k'}))\nonumber\\
    &+\sum_{j,k}\phi_{jk} w_{j} \{-\frac{1}{2}(\e_{j}-\muu_k)^T\Si_k^{-1}(\e_{j}-\muu_k)
    -\log[(2\pi)^{d/2} \vert \Si_k\vert^{1/2}]\} \nonumber\\
    &-\log \Gam(\sum_{k=1}^K \gamm_{j}) + \sum_{k=1}^K \log\Gam(\gamm_k) - \sum_{k=1}^K (\gamm_k-1)(\Psi(\gamm_k)-\Psi(\sum_{k'=1}^K\gamm_{k'}))\nonumber\\
    &-\sum_{j=1}^{J}\sum_{k=1}^K \phi_{jk}\log \phi_{jk}.
    \end{align}
\endgroup

\textbf{Definition and Interpretation of the Loss Function.} We can interpret the meaning of each term of ELBO as follows:
\begin{itemize}
    \item \textbf{Regularization Term for Document-Level Explanations.} The sum of the first and the fourth terms, namely ${\EB}_q[\log p(\tha_m|\alp)] - {\EB}_q[\log q(\tha_m)]$, is equal to $-KL(q(\tha_m)|p(\tha_m|\alp))$, which is the negation of KL Divergence between the variational posterior probability $q(\tha_m)$ and the prior probability $p(\tha_m|\alpha)$ of the topic proportion $\tha_m$ for document $m$. Therefore maximizing the sum of these two terms is equivalent to minimizing the KL Divergence $KL(q(\tha_m)|p(\tha_m|\alp))$; this serves as a regularization term to make sure the inferred $q(\tha_m)$ is close to its prior distribution $p(\tha_m|\alp)$.  

 \item {\textbf{Regularization Term for Word-Level Explanations.}} Similarly, the sum of the second and the last terms (ignoring the summation over the word index $j$ for simplicity), namely ${\EB_q}[\log p(z_{mj}|\tha_m)] - {\EB_q}[\log q(z_{mj})]$ is equal to $-KL(q(z_{mj})|p(z_{mj}|\tha_m))$, which is the negation of the KL Divergence between the variational posterior probability $q(z_{mj})$ and the prior probability $p(z_{mj}|\tha_m)$ of the word-level topic assignment $z_{mj}$ for word $j$ of document $m$. Therefore maximizing the sum of these two terms is equivalent to minimizing the KL Divergence $KL(q(z_{mj})|p(z_{mj}|\tha_m))$; this serves as a regularization term to make sure the inferred $q(z_{mj})$ is close to its `prior' distribution $p(z_{mj}|\tha_m)$. 

\item {\textbf{Likelihood Term to Indicate How Much FLM Information is Explained.}} The third term $\sum\nolimits_{j=1}^{J_m}{\EB_q} [\log p(\e_{mj}|z_{mj},\muu_{z_{mj}}, \Si_{z_{mj}})]$ is to maximize the log likelihood $p(\e_{mj}|z_{mj},\muu_{z_{mj}}, \Si_{z_{mj}})$ of every contextual embedding $\e_{mj}$ (for word $j$ of document $m$) conditioned on the inferred $z_{mj}$ and the parameters $(\muu_{z_{mj}}, \Si_{z_{mj}})$. 
\end{itemize}
{In this way, we expand the ELBO to a concrete loss function. Each line of~\eqnref{eq:elbo_app} corresponds to the expansion of each of the five terms in the ELBO mentioned above (i.e.,~\eqnref{eq:elbo} in the paper).}

\section{Experimental Settings and Implementation Details}\label{app:experiment}
We will release all code, models, and data. Below we provide more details on the experimental settings and practical implementation.

\textbf{Data Preprocessing and More Datasets.} 
\begin{table}[t]
      \caption{Dataset statistics, including the number of documents ($M$), vocabulary size ($V$), the number of corpus categories ($L$), and the average document length ($\overline{J}$).}\label{table:dataset}
      \vskip -0.3cm
        \footnotesize
      \centering
      \scriptsize
      \begin{tabular}{lcccc}
        \toprule
        \textbf{Dataset}  & $M$ & $V$  & $L$  & $\overline{J}$   \\
       \midrule
        20 Newsgroups  & 16,309 &   1,612  &  20 & 48\\
        M10 & 8,355 & 1,696 & 10 & 5.9 \\
        BBC News & 2,225 & 2,949 & 5&  120  \\
        
        \bottomrule
      \end{tabular}
       \vskip -0.5cm
\end{table}
{We follow \citet{terragni2021octis} and \citet{zhang2022cetoic} to pre-process these datasets. The statistics of the datasets are summarized in~\tabref{table:dataset}. We use the standard 8:1:1 train/validation/test set split.} We {also} use the GLUE benchmark~\citep{wang2018glue} to perform \emph{additional} conceptual interpretation in this section and~\secref{app:interpret_more}. This benchmark includes multiple sub-tasks of predictions, with the paired sentences as inputs. In this paper, we use 4 datasets from GLUE (MRPC, RTE, STS-B, and QQP) to show contextual interpretations. {Specifically, we apply VALC to multiple complex natural language understanding (NLU) tasks in the GLUE benchmark. For example, in~\appref{app:interpret_more}, we show the three-level conceptual explanations of \emph{four different tasks} in the GLUE benchmark using VALC, i.e., 
\begin{itemize}[nosep]
    \item \textbf{Microsoft Research Paraphrase Corpus (MRPC)}, where the task is paraphrase identification and semantic textual similarity, 
    \item \textbf{Recognizing Textual Entailment (RTE)}, where the task is to determine whether one sentence (the premise) entails another sentence (the hypothesis), 
    \item \textbf{Semantic Textual Similarity Benchmark (STS-B)}, where the task is to measure the degree of semantic similarity between pairs of sentences (from 0 to 5), and 
    \item \textbf{Quora Question Pairs (QQP)}, where the task is to classify whether one question is the duplicate of the other. 
\end{itemize}
}

\textbf{Implementation.} 
{We implemented and trained the model using PyTorch~\cite{paszke19pytorch} on an A5000 GPU with 24GB of memory. The training duration was kept under a few hours for all datasets. We utilized the Adam optimizer~\cite{kingma14adam} with initial learning rates varying between $10^{-5}\sim10^{-3}$, tailored to the specific requirements of each dataset.}

\textbf{Visualization Postprocessing.} 
For better showcase the dataset-level concepts as in~\figref{topic-m10} of the main paper, we may employ simple linear transformations on the embedding of words after the aforementioned PCA step, in order to scatter all the informative words on the same figures. However, for some datasets such as STS-B, this is not necessary; therefore we do not use it for these datasets.  

\textbf{Topic (Concept) Identification.} 
Inspired by~\citet{blei2003latent}, we identify meaningful topics by listing the top-5 topics for each word, computing the inverse document frequency (IDF), and filtering out topics with the lowest IDF scores. Note that although GLUE benchmark are datasets that consists of documents with small size, making it particularly challenging for traditional topic models (such as LDA) to learn topics; interestingly our VALC can still perform well in learning the topics. We contribute this to the following observations: (1) Compared to traditional LDA using \emph{discrete} word representations, VALC uses \emph{continuous} word embeddings. In such a continuous space, topics learned for one word can also help neighboring words; this alleviates the sparsity issue caused by short documents and therefore learns better topics. (2) VALC's attention-based continuous word counts further improves sample efficiency. In VALC, important words have larger attention weights and therefore larger continuous word counts. In this case, \emph{one} important word in a sentence possesses statistical (sample) power equivalent to \emph{multiple} words; this leads to better sample efficiency in VALC.

\textbf{Computational Complexity.} 
{Our VALC introduces minimal overhead in terms of model training cost. Specifically, VALC's computational complexity is $O(TKd^2)$, where $T$ is the number of epochs (a small number, such as 3, is sufficient for convergence), $K$ is the number of concepts, and $d$ is the dimension of the embeddings (in hidden layers). This means that VALC's computational cost \emph{scales linearly} with the number of concepts $K$ (similar to existing methods). }

\textbf{More NLP Tasks.} 
{VALC can be naturally applied to other NLP tasks, such as named entity recognition (NER), reading comprehension, or question answering. Specifically, these tasks involve transformer predictions from multiple positions within the context, rather than relying solely on the `CLS' token. For example, NER predicts each token in the document as the beginning (`B') of an entity, the inside (`I') of entities, etc.
To accommodate this and use VALC to explain each token $j$ in the context, we can substitute the attention from the `CLS' token with (1) the attention from the `CLS' token to all tokens of the previous layer with (2) the attention from token $j$ to all tokens of the previous layer in transformers (e.g., using the attention weights from the predicted label `B' to all tokens of the previous layer as $\a_{m}$ in VALC). This adaptation allows VALC to maintain its explanatory power across various NLP applications, demonstrating its versatility and effectiveness in a wide range of tasks.}

\section{{More Details on Concept Editing}} \label{app:concept_edit}
{We perform concept pruning to the CLS embeddings for VALC (details in~Alg. \ref{alg:prune_valance}). Since BERTopic and CETopic can infer concepts (topics) only at the document level, their only choice is to prune a concept by completely removing input tokens assigned to the concept (as mentioned in~\secref{sec:setup} and \ref{sec:exp_property}). To compare our learned concepts with the baseline models, we first follow their configurations~\citep{grootendorst2020bertopic,zhang2022cetoic} to fix BERT model parameters when learning the topics/concepts, train a classifier on top of the fixed contextual embeddings, and then perform concept pruning~\citep{koh2020CBM} for different evaluated models on the same classifier. Note that concept editing is deterministic; therefore, we conduct our experiments with a single run.}

{Specifically, we} assume each BERT model contains a backbone and a classifier. To perform concept editing: 
\begin{itemize}[nosep,leftmargin=18pt]
 \item[(1)]We first train a classifier on top of the \emph{fixed} BERT embeddings generated by the \emph{fixed} backbone to get the original accuracy in the `Unedited' column (in~\tabref{table:prune_dataset} and~\tabref{table:prune_ablation} of the main paper). 
\item[(2)] We then apply the same embedding cluster methods to these BERT embeddings to infer the concepts/topics for each dataset.
\item[(3)] Finally, with the inferred concepts/topics from the baselines (SHAP/LIME, BERTopic and CETopic in~\tabref{table:prune_dataset} of the main paper) and our VALC variants (Unweighted and Weighted in~\tabref{table:prune_ablation} of the main paper), we perform concept editing and feed the concept-edited embeddings into the trained classifier from Step (1) to compute the editing accuracy for different methods. 
\end{itemize}
Since here one \emph{does not fully finetune the BERT model} (i.e., keeping the backbone fixed), the editing accuracy is expected to be lower than the `Finetune' column (in~\tabref{table:prune_dataset} and~\tabref{table:prune_ablation} of the main paper), which serves as the oracle.~\tabref{table:prune_dataset} of the main paper shows that our VALC learns better concepts than the baselines, and~\tabref{table:prune_ablation} of the main paper shows that the weighted variant of VALC performs better. 
\begin{algorithm}[H]
 \caption{Algorithm for VALC Document-Level Concept Editing }\label{alg:prune_valance_cls}
 \textbf{Input:} FLM $f(\cdot)$, classifier $g(\cdot)$, classification loss $L$, dataset $\{\DM_m\}_{m=1}^{M}$, labels $\y$, constant factor $\omega$.\\
 \textbf{for} {$m=1:M$} \textbf{do}{

 \quad $\c_{m} = f(\DM_{m})$ 

 
\quad $\x^* = QP(\c_{m}, \{\muu_k\}_{k=1}^{K})$ 

 \quad $k^* = \arg \min_{k=1}^K L(g(\c_{m} - \omega\cdot x^*_k \muu_{k}), y_m)$

 \quad $\c_{m} \leftarrow\c_{m} - \omega\cdot x^*_{k^*} \muu_{k^*}$
 }
\end{algorithm}
{Note that SHAP and LIME both interpret the CLS token's embedding, and hence their concept vectors have the same dimension as the FLM embedding vector (768 in our case). When we conduct concept editing on the $k$'th dimension/concept, we simply subtract the CLS embedding's dimension $k$ with the average value in the batch on dimension $k$ (which means that we know little about the concept/dimension $k$ on this document), and keep values of the other dimensions unchanged. Note that the pruning process is exactly the same for SHAP and LIME. Therefore SHAP and LIME have identical test accuracy and accuracy gain.}

\textbf{{Document-Level Concept Editing.}} \label{app:document_edit}
{We describe the document-level concept editing algorithm of VALC in~\algref{alg:prune_valance_cls}. $c_m$ denotes the `CLS' embedding of document $m$ (see~\figref{fig:overview} of the main paper).}

\section{{Connections Between the Defined Properties and Empirical Results}}
VALC is able to show which words or embeddings contributed to the document-level concept $k$. Specifically, our variational parameter (a vector) $\ph_{mj}\in \mathbb{R}^{K}$ describes how much word $j$ contributes to document $m$. For example, the $k$-th entry of $\ph_{mj}$, denoted as $\phi_{mjk}$ in the paper, describes how much word $j$ contributes to document $m$ in terms of concept $k$. Therefore, one could use $\arg\max_j \phi_{mjk}$ to find the word that contributes most to document $m$'s concept $k$. 
Below, we will explain these four properties using~\figref{topic-m10} as a running example. 
\begin{itemize}
    \item[(1)] \textbf{Multi-Level Structure} ensures that VALC learns the dataset-, document-, and word-level concepts jointly. In \figref{topic-m10}:
\begin{itemize}
\item \textbf{Dataset-level} concepts are highlighted by the top words of each concept (the top right box of \figref{topic-m10}) and the distribution of their embeddings in the FLM (left and middle figures of \figref{topic-m10}); for example, \emph{Concept 5 (data analysis)} is marked in red.
\item \textbf{Document-level} concepts are demonstrated by each document's topic; for instance, in the box for document (a) in \figref{topic-m10} (right), VALC identifies \emph{Topic (Concept) 5} as key to the FLM's prediction of the label 3 (computer science). 
\item \textbf{Word-level} concepts are identified by words in documents. For example, in the box for document (a) in \figref{topic-m10}, VALC highlights the words `genetic' and `neural' because they are highly related to \emph{Concept 5 (data analysis)}. Terms like `genetic algorithms' and `neural networks' are related to data analysis, aligning with the document-level concept.
    \end{itemize}
    \item[(2)] \textbf{Normalization} ensures that concept learning is regulated and smoothed, with inferred concepts appearing reasonable. Specifically, in the document-level explanation $\tha_m$ and word-level explanation $\ph_{mj}$, all concepts are assigned a value within the range of $0\sim 1$, and all entries sum up to 1, i.e., $\sum_{k=1}^{K}\phi_{mijk}=1$ and $\sum_{k=1}^K\theta_{mk}=1$. This introduces `competition' among different concepts; a larger strength for one concept means smaller strength for other concepts. Therefore, together with the help of the Dirichlet prior, it implicitly encourages sparser concept-level explanations $\theta_m$, which are more aligned with humans' cognitive processes and more human-understandable (humans tend to make decisions with a \emph{small} set of concepts). 
    \item[(3)] \textbf{Additivity} enables FLMs to incorporate relevant concepts and exclude irrelevant ones, thereby enhancing prediction accuracy (as shown in~\tabref{table:prune_dataset} and~\tabref{table:prune_ablation}). For example, in document (a) of \figref{topic-m10}, VALC identifies \emph{Concept 5} as a highly related concept, distinguishing it from less related concepts. In practice, this may help practitioners identify key concepts in model prediction and more effectively intervene to improve model prediction accuracy (e.g., an expert may find that a concept is relevant and manually down-weight the concept to enhance the model's prediction).
    \item[(4)] \textbf{Mutual Information Maximization} ensures a strong correlation between (1) VALC's generated concept explanations and (2) the explained model's representation and predictions. In other words, it ensures that VALC is explaining the target FLM, rather than generating concept explanations irrelevant to the target FLM. For instance, in document (a) of \figref{topic-m10}, the inferred document-level \emph{Concept 5} (data analysis) effectively explains the FLM prediction, i.e., label 3 (computer science), by highlighting the intrinsic link between the data analysis concept and the class label computer science. This connection is evidenced by the words in dataset-level \emph{Concept 5} (top right box). The mutual information between the inferred \emph{Concept 5} (data analysis) and label 3 (computer science) contributes to generating high-quality explanations.

\end{itemize}

\section{More Conceptual Interpretation Results in Different Downstream Tasks}\label{app:interpret_more}

\textbf{Dataset-Level Interpretations.} \label{sec:dataset-interpret}
As in the main paper, we leverage VALC as an interpreter on MRPC, RTE, STS-B and QQP, respectively, sample $3,3,4,4$ concepts (topics) for each dataset respectively, and plot the word embeddings of the top words (closest to the center $\muu_i$) in these concepts using PCA. \figref{topic}(left) shows the concepts from MRPC. We can observe \red{Concept 20} is mostly about \underline{policing}, including words such as `suspect', `police', and `house'. \green{Concept 24} is mostly about \underline{politics}, including words such as `capital', `Congress', and `Senate'. \purple{Concept 27} contains mostly \underline{names} such as `Margaret' and `Mary'. Similarly, \figref{topic}(right) shows the concepts from RTE. We can observe \red{Concept 67} is related to \underline{West Asia} and includes words such as `Quran' and `Pasha'. \green{Concept 13} is related to \underline{Europe} and includes European countries/names such as `Prussia' and `Salzburg'. \purple{Concept 91} is mostly about \underline{healthcare} and includes words such as `physiology' and `insulin'. \figref{topic-stsb} shows the concepts from STS-B. We can observe \red{Concept 63} is mostly about \underline{household and daily life}, including words such as `trash', `flowers', `airs', and `garden'. \purple{Concept 60} is mostly about \underline{tools}, including words such as `stations', `rope', `parachute', and `hose'. \blue{Concept 84} is mostly about \underline{national security}, including words such as `guerilla', `NSA', `espionage', and `raided'. \green{Concept 55} contains mostly \underline{countries and cities} such as `Kiev', `Moscow', `Algeria', and `Ukrainian'. 
Similarly, \figref{topic-qqp} shows the concepts from QQP. We can observe that \red{Concept 12} is mostly about \underline{negative attitude}, including words such as `boring', `criticism', and 'blame'. \purple{Concept 73} is mostly about \underline{Psychology}, including words such as `adrenaline', `haunting', and 'paranoia'. \blue{Concept 34} is mostly about \underline{prevention and conservatives}, including words such as `destroys', `unacceptable', and 'prohibits'. \green{Concept 64} is mostly about \underline{strategies}, including words such as `rumours', `boycott', and 'deportation'. 
\begin{figure*}[!t]
        \centering
        \includegraphics[width = 1.0\textwidth]{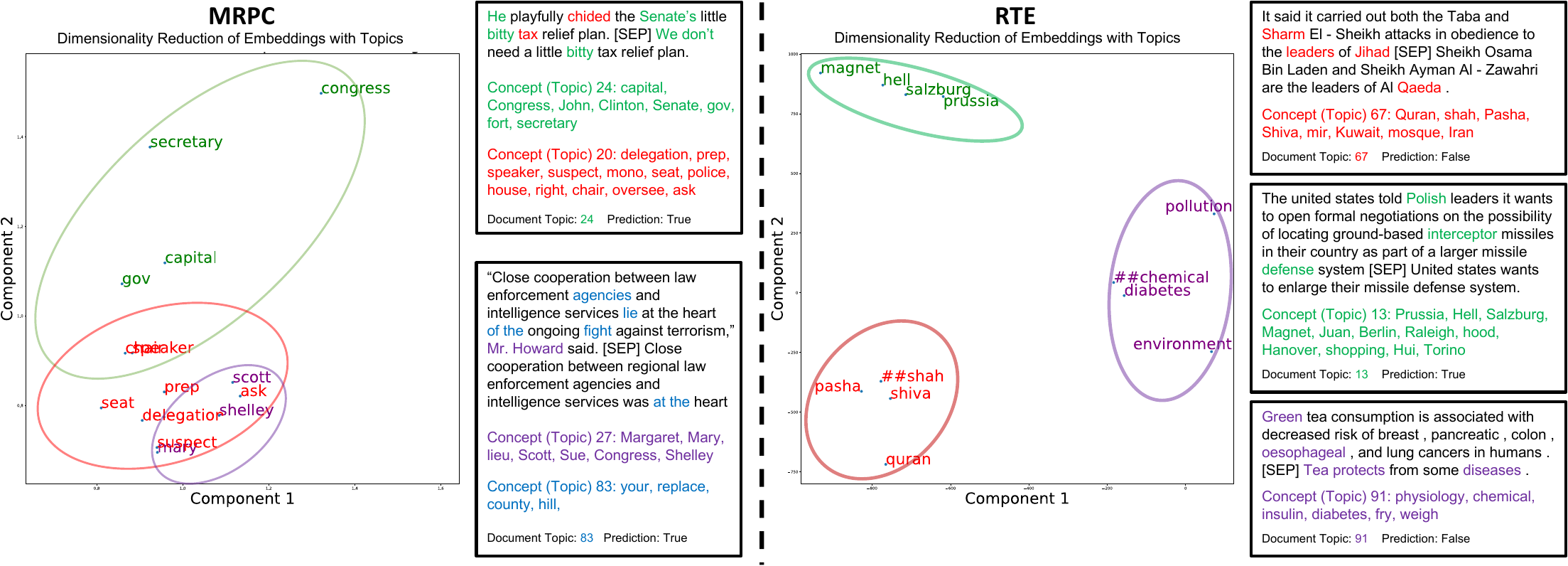}
        \vskip -0.2cm
        \caption[width = 0.8\linewidth]{Visualization of VALC's learned topics of contextual word embeddings. \textbf{Left:} MRPC's dataset-level interpretation with two example documents. \blue{Concept 83} is relatively far from the other three concepts in the embedding space; therefore we omit it on the left panel for better readability. 
        \textbf{Right:} RTE's dataset-level interpretation with three example documents. }
        \label{topic}
\end{figure*}

\begin{figure*}[!t]
        \centering
        \includegraphics[width = 1.0\textwidth]{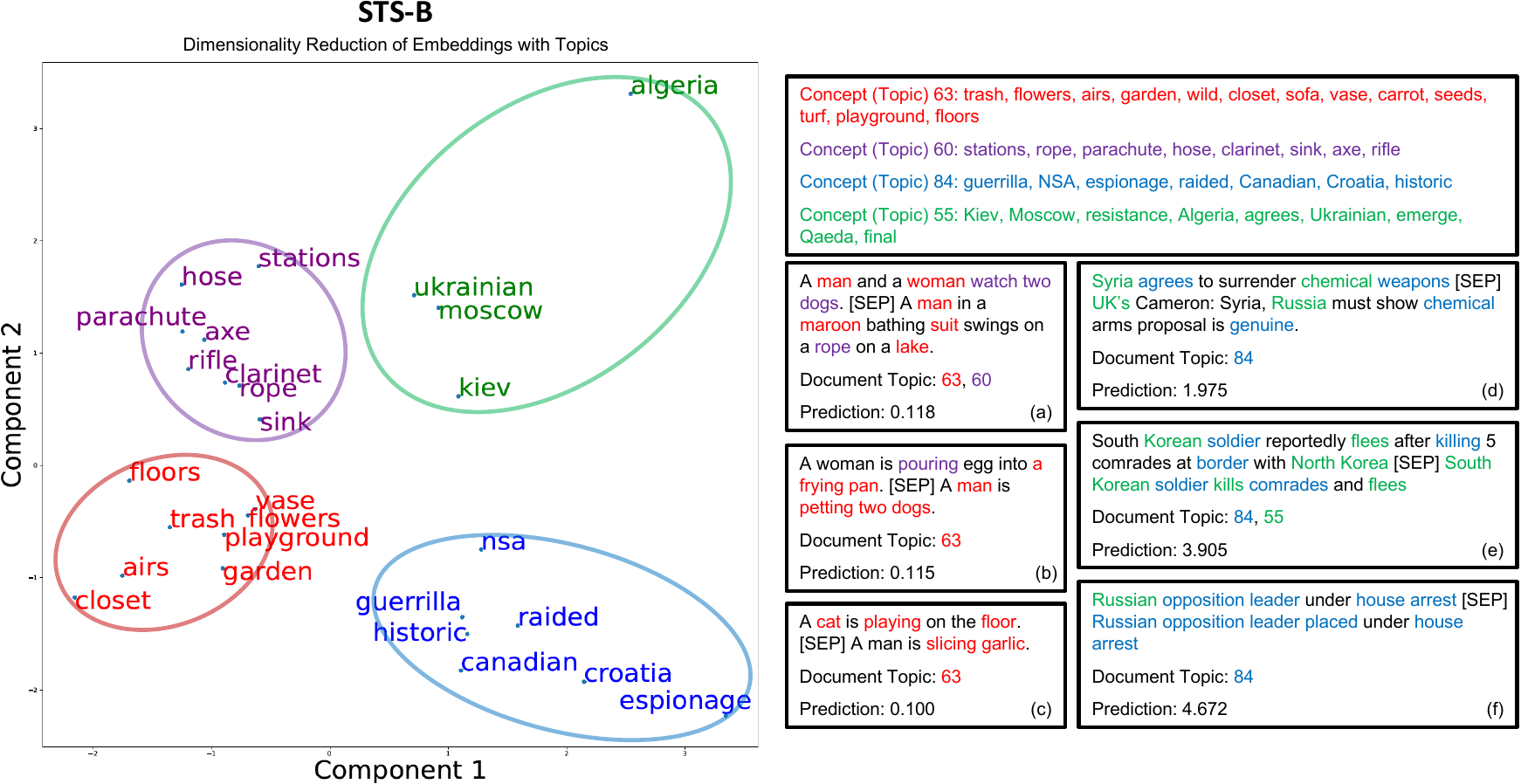}
        \vskip -0.2cm
        \caption[width = 0.5\textwidth]{Visualization of VALC's learned topics of contextual word embeddings. We show STS-B's dataset-level interpretation with six example documents. The prediction of VALC is between the range of $[0,5]$.}
        \label{topic-stsb}
\end{figure*}

\begin{figure*}[!t]
        \centering
        \includegraphics[width = 1.0\textwidth]{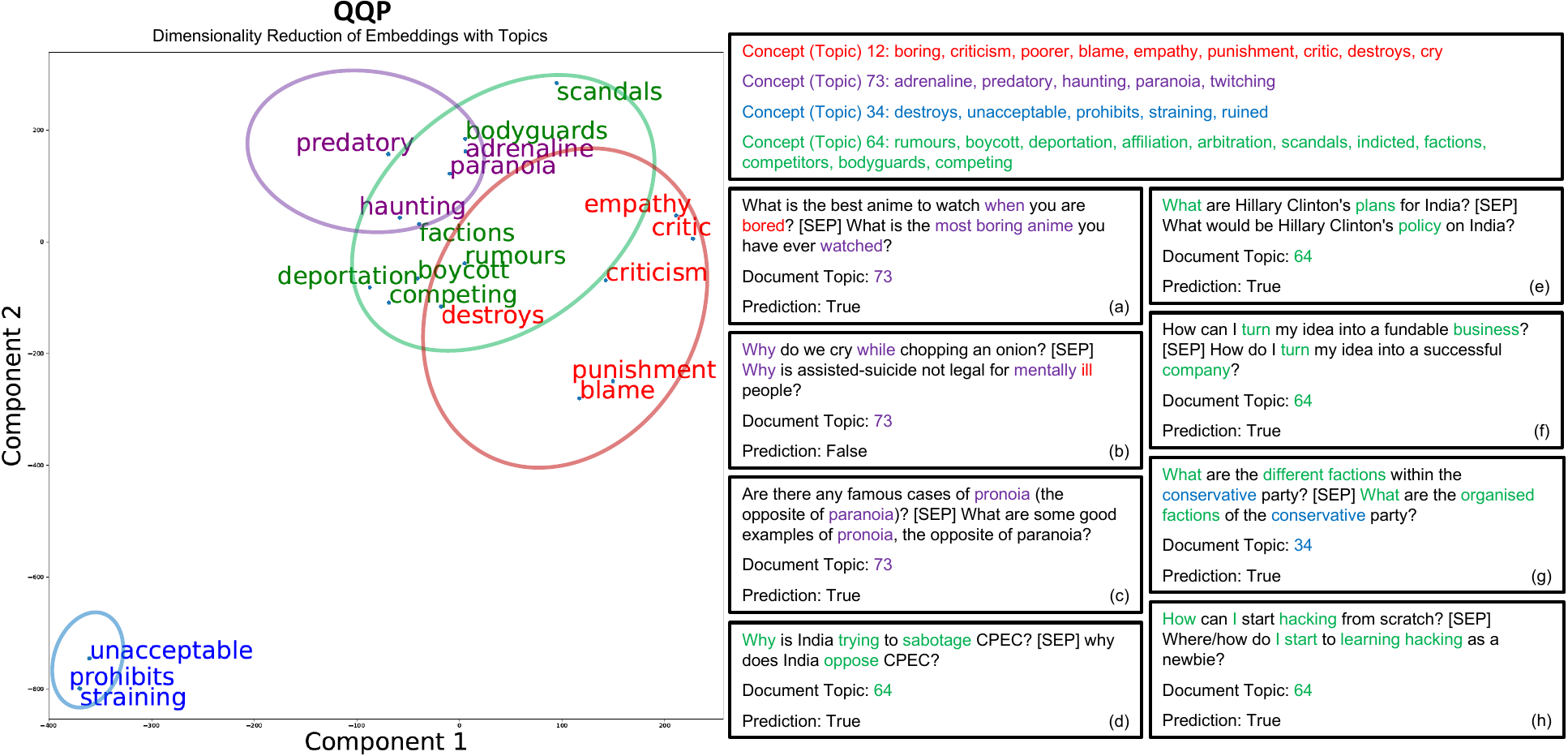}
        \vskip -0.2cm
        \caption[width = 0.5\textwidth]{Visualization of VALC's learned topics of contextual word embeddings. We show QQP's dataset-level interpretation with eight example documents.}
        \label{topic-qqp}
        \vskip -0.6cm
\end{figure*}

\textbf{Document-Level Interpretations.} 
For document-level conceptual interpretations, we sample two example documents from MRPC (\figref{topic}(left)), three from RTE (\figref{topic}(right)), six from STS-B (\figref{topic-stsb}) and eight from QQP (\figref{topic-qqp}), respectively, where each document contains a pair of sentences. The MRPC task is to predict whether one sentence paraphrases the other. For example, in the first document of MRPC, we can see that our VALC correctly interprets the model prediction `True' with \green{Concept 24 (politics)}. The RTE task is to predict whether one sentence entail the other. For example, in the second document of RTE, VALC correctly interprets the model prediction `True' with \green{Concept 13 (countries)}. The STS-B task is to predict the semantic similarity between two sentences with the score range of $[0,5]$. For example, in Document~(a) of \figref{topic-stsb}, we can see that VALC correctly interpret the model's predicted similarity score `$0.118$' (which is relatively low,) with \red{Concept 63 (household and daily life)} and \purple{Concept 60 (tools)}. Similarly, in Document~(f) of \figref{topic-stsb}, we can see that VALC correctly interpret the model's predicted similarity score `$4.672$' (which is relatively high) with \blue{Concept 84 (national security)}. 
The QQP task is to predict whether the two questions are paraphrase of each other. For example, in Document~(b) of \figref{topic-qqp}, we can see that VALC correctly interprets the model's predicted label `False' with \purple{Concept 73 (Psychology)}. Similarly, in Document~(e) of \figref{topic-qqp}, we can see that VALC correctly interprets the model's predicted label `True' with \green{Concept 64 (strategies)}.

\textbf{Word-Level Interpretations.}
For word-level conceptual interpretations, we can observe that VALC interpret the FLM's prediction on MRPC's first document (\figref{topic}(left)) using words such as `senate' and `bitty' that are related to politics. Note that the word `bitty' is commonly used (with `little') by politicians to refer to the small size of tax relief/cut plans. Similarly, for RTE's first document (\figref{topic}(right)), VALC correctly identifies \red{Concept 67 (West Asia)} and interprets the model prediction `False' by distinguishing between keywords such as `Jihad' and `Al Qaeda'. likewise, we can observe that VALC interprets FLM's prediction on Document~(c) of \figref{topic-stsb} using words such as `cat', `floor', and `garlic' that are related to \red{household and daily life}. Also, VALC interprets FLM's prediction on Document~(e) of \figref{topic-stsb} using words such as `soldier' and `border' that are related to \blue{national security}. Similarly, for QQP's Document~(d) (\figref{topic-qqp}), VALC correctly interprets the model prediction `True' by identifying keywords such as `sabotage' and `oppose' with similar meanings in the topic of \green{strategies}. For QQP's Document~(g), (\figref{topic-qqp}), VALC interprets the words in the both sentences with the same semantics, such as `conservative' that is related to \blue{prevention and conservatives} (note that in politics, `conservative' refers to parties that tend to prevent/block new policies or legislation), and thereby predicting the correct label `True'.

\textbf{Example Concepts.} 
\begin{table}[t]
\setlength{\tabcolsep}{2pt}
      \caption{Example concepts on RTE dataset learned by VALC.}\label{table:concept-example-rte}
        \footnotesize
      \centering
 
      \resizebox{0.9\textwidth}{!}{%
      \begin{tabular}{l|cccccccc}

        \midrule
        {\bf{Concepts}} & \multicolumn{8}{c}{Top Words}      \\
       \midrule
        \bf{bio-chem}  & cigarette & biological  &  ozone & cardiovascular & chemist  & liver & chemical& toxin\\ 
        \bf{citizenship} & indies & bolivian & fiji & surrey & jamaican & dutch & latino & caribbean\\ 
       \bf{names} &  mozart & spielberg & einstein & bush & kurt & liszt & hilton & lynn\\ 
        \bf{conspiracy} & secretly & corrupt & disperse & infected & ill & hidden & illegally & sniper \\ 
        \bf{administration} & reagan & interior & ambassador & prosecutor & diplomat & legislative & spokesman & embassy \\ 
        \bf{crime} & fraud & laundering & sheriff & prosecutor & corruption & fool&robber & greed \\ 
        \bottomrule
      \end{tabular}
      }
       \vskip -0.0cm
\end{table}
Following~\citet{blei2003latent}, we show the learned concepts on the RTE dataset in~\tabref{table:concept-example-rte}, which is complementary to aforementioned explanations. We select several different topics from~\figref{topic}. As in Sec. 5.4 of the main paper, we obtain top words from each concept via first calculating the average of the each word's corresponding contextual embeddings over the dataset, and then getting the nearest words to each topic center ($\muu_k$) in the embedding space. As we can see in~\tabref{table:concept-example-rte}, VALC can capture various concepts with profound and accurate semantics. Therefore, although FLM embeddings are contextual and continuous, our VALC can still find conceptual patterns of words on the dataset-level.  

\section{{More Quantitative Results.}}\label{app:quantitative}

\textbf{{Document Classification with VALC Concepts.}}
We conducted additional experiments to perform document classification using the `CLS' token's embedding and $\tha$ (inferred from VALC) as features.~\tabref{tab:theta_comparison} shows the results on three datasets. The results show that our VALC can learn meaningful concept vector $\tha$, which can improve model predictions of document labels.

\begin{table}[t]
  \centering
  \caption{Comparison of Unedited and Unedited+$\tha$ on 20 Newsgroups, M10, and BBC News. We mark the best results with \textbf{bold face}.}
  
    \begin{tabular}{l|rr}
    \hline
    \textbf{} & {Unedited} & {Unedited+$\tha$} \\
    \hline
   {20 Newsgroups} & 51.26 & \textbf{51.74} \\
    \hline
   {M10} & 69.74 & \textbf{70.76} \\
    \hline
   {BBC News} & 93.72 & \textbf{94.90} \\
    \hline
    \end{tabular}%
  \label{tab:theta_comparison}%
\end{table}%

\section{Theory on the Mutual Information Maximization Property}
\label{app:proof}
We provide the following proof of Theorem 4.1 of the main paper. 

For convenience, let $\Omega = (\muu_{k=1}^K, \Sigma_{k=1}^K)$, and $\beta = (\tha_m,\z_m)$. 

We then introduce a helper joint distribution of the variables $\e_m$ and $\beta$, $s(\e_m, \beta)=p(\e_m)q(\beta|\e_m)$.

According to the definition of ELBO of Section 3.4.1, in~\eqnref{eq:MI_bound}, we have 

\begin{align}
LHS = \LM(\gamma_m, \phi_m; \alpha, \Omega) = \EB_{p(\e_m)}[\EB_{q(\beta)}[\log p(\e_m|\Omega, \beta)]] + \EB_{q(\beta)}[\log q(\beta|\Omega)].
\end{align}
Since $\EB_{q(\beta)}[\log q(\beta|\Omega)] \le 0$, we only need to prove that 
\begin{align}
\EB_{p(\e_m)}[\EB_{q(\beta)}[\log p(\e_m|\Omega, \beta)]] \le I_s(\e_m;\beta) - H(\e_m) = RHS.
\end{align}
Then we have that
\begingroup\makeatletter\def\f@size{7.5}\check@mathfonts
\begin{align}
\EB_{p(\e_m)}[\EB_q[\log p(\e_m|\beta,\Omega)]] &\le 
\EB_{p(\e_m)}[\EB_q[\log p(\e_m|\beta)]] \nonumber\\ 
&=\EB_{p(\e_m)}[\EB_q[\log\frac{q(\e_m|\beta)}{p(\e_m)}\frac{p(\e_m)p(\e_m|\beta)}{q(\e_m|\beta)}]]\nonumber\\
&=\EB_{p(\e_m)}[\EB_q[\log\frac{q(\e_m|\beta)}{p(\e_m)}]]
+\EB_{p(\e_m)}[\EB_q[\log p(\e_m)]] + \EB_{p(\e_m)}[\EB_{q}[\log\frac{p(\e_m|\beta)}{q(\e_m|\beta)}]]\nonumber\\
&=I_s(\e_m;\beta) - H(\e_m) - \EB_{q}[KL(q(\e_m|\beta)|p(\e_m|\beta))]\nonumber\\
&\le I_s(\e_m;\beta) - H(\e_m) - 0 = RHS, 
\end{align}
\endgroup
which concludes the proof of Theorem 4.1. 

\section{Theoretical Analysis on Continuous Word Counts}
\label{app:theory}

Before going to the claims and proofs, first we specify some basic problem settings and assumptions. Suppose there are $K+1$ topic groups, each of which is regarded to be sampled from a parameterized multivariate Gaussian distribution. In specific, the $K+1$ 'th distribution of topic has a much larger covariance, and in the same time, closed to the center of embedding space. The prementioned properties can be measured by a series of inequalities:

The approximate marginal log-likelihood of word embeddings, i.e., the third term of the ELBO as mentioned in Eqn. 2 of the main paper, is: 
\begin{align} 
\label{marginal likelihood}
   \mathcal{L}^{(train)} &= \sum\nolimits_{j=1}^{J_m}\EB_q[\log p(\e_{mj}|z_{mj},\muu_{z_{mj}}, \Si_{z_{mj}})]\nonumber\\
    &=\sum_{m,j,k}\phi_{mjk} w_{mj} \{-\frac{1}{2}(\e_{mj}-\muu_k)^T\Si_k^{-1}(\e_{mj}-\muu_k)    -\log[(2\pi)^{d/2} \vert \Si_k\vert^{1/2}]\}.
\end{align}
{The above equation is the training objective, yet for fair comparison of different training schemes, we calculate the approximated likelihood with word count $1$ for all words.}

\begin{align}
\label{marginal eval}
   \mathcal{L}^{(eval)} &= \sum\nolimits_{j=1}^{J_m}\EB_q[\log p'(\e_{mj}|z_{mj},\muu_{z_{mj}}, \Si_{z_{mj}})]\nonumber\\
    &=\sum_{m,j,k}\phi_{mjk} \{-\frac{1}{2}(\e_{mj}-\muu_k)^T\Si_k^{-1}(\e_{mj}-\muu_k)    -\log[(2\pi)^{d/2} \vert \Si_k\vert^{1/2}]\}.
\end{align}

\subsection{Gaussian Mixture Models}

Suppose we have a ground truth GMM model with parameters $\pii^*\in \mathbb{R}^K$ and $\{\muu_k^*, \Si_k^*\}_{k=1}^K$, with $K$ different Gaussian distributions. In the dataset, let $N$ and $N_s$ denote the numbers of non-stop-words and stop-words, respectively. Then the marginal log likelihood of a learned GMM model on a given data sample $\e$ can be written as 
\begin{align}
    p(\e|\{\muu,\Si\},\pii) = \sum_{k=1}^{K} \pii_k\mathcal{N}(\e;\muu_k,\Si_k).
\end{align}
Assuming a dataset of $N+N_s$ words $\{\e_i\}_{i=1}^{N+N_s}$ and taking the associated weights $w_{i}$ for each word into account, the log-likelihood of the dataset can be written as
\begingroup\makeatletter\def\f@size{6.5}\check@mathfonts
\begin{align}
    \sum_{i=1}^{N+N_s} p(\e_i|\{\muu_k,\Si_k\}_{k=1}^K,\pii) =  \sum_{i=1}^{N}\log\sum_{k=1}^{K}  w_i \pii_k\mathcal{N}(\e_i;\muu_k,\Si_k) + \sum_{i=N+1}^{N+N_s}\log\sum_{k=1}^{K}  w_i \pii_k\mathcal{N}(\e_i;\muu_k,\Si_k).
\end{align}
\endgroup
Leveraging Jensen's inequality, we obtain a lower bound of the above quantity (denoting as $\Tha$ the collection of parameters $\{\muu_k, \Si_k\}_{k=1}^K$ and $\pii$):
\begingroup\makeatletter\def\f@size{6.5}\check@mathfonts
\begin{align}
    \mathcal{L}_{\text{GMM}}(\Tha,\{w_i\}) = \sum_{i=1}^{N} w_i\log\sum_{k=1}^{K}  \pi_k\mathcal{N}(\e_i;\muu_k,\Si_k) + \sum_{i=N+1}^{N+N_s} w_i\log\sum_{k=1}^{K} \pi_k\mathcal{N}(\e_i;\muu_k,\Si_k) + C,
\end{align}
\endgroup
where C is a constant. 

In the following theoretical analysis, we consider the following three different configurations of the weights $w_i$.

\begin{definition}[\textbf{Weight Configurations}]
\label{def:three_weights}
We define three different weight configurations as follows: 
\begin{itemize}
    \item Identical Weights: $w_i = \frac{1}{N+N_s}$, $i\in \{1,2,\dots,N+N_s\}$
    \item Ground-Truth Weights : $w_i =\begin{cases}  \frac{1}{N},~~i\in\{1,2,\dots,N\}\\ 0,~~i\in \{N+1,N+2,\dots,N+N_s\}\end{cases}$
    \item Attention-Based Weights: $w_i =\begin{cases} \lambda_1 \in  [\frac{1}{N+N_s},\frac{1}{N}], ~~i\in\{1,2,\dots,N\}\\ \lambda_2 \in [0,\frac{1}{N+N_s}],~~i\in \{N+1,N+2,\dots,N+N_s\}\end{cases}$
\end{itemize}
\end{definition}

\begin{definition}[\textbf{Advanced Weight Configurations}]
\label{def:three_free_weights}
We define three different weight configurations as follows: 
\begin{itemize}
    \item Identical Weights: $w_i = \frac{1}{N+N_s}$, $i\in \{1,2,\dots,N+N_s\}$
    \item Ground-Truth Weights : $w_i =\begin{cases}  \frac{1}{N},~~i\in\{1,2,\dots,N\}\\ 0,~~i\in \{N+1,N+2,\dots,N+N_s\}\end{cases}$
    \item Attention-Based Weights: $w_i \in\begin{cases}  [\frac{1}{N+N_s},\frac{1}{N}], ~~i\in\{1,2,\dots,N\}\\ [0,\frac{1}{N+N_s}],~~i\in \{N+1,N+2,\dots,N+N_s\}\end{cases}$
\end{itemize}
\end{definition}

\begin{definition}[\textbf{Optimal Parameters}]
\label{def:opt_para}
With~\defref{def:three_weights}, the corresponding optimal parameters are then defined as follows:
\begin{align}
    & \Tha_I = \arg \max_{\Tha} \mathcal{L} (\Tha; \w\rightarrow \mbox{Identical}),\\
    & \Tha_G = \arg \max_{\Tha} \mathcal{L} (\Tha; \w\rightarrow \mbox{GT}),\\
    & \Tha_A = \arg \max_{\Tha} \mathcal{L} (\Tha; \w\rightarrow \mbox{Attention}),
\end{align}
where $\w\rightarrow \mbox{Identical}$, $\w\rightarrow \mbox{GT}$, and $\w\rightarrow \mbox{Attention}$ indicates that `Identical Weights', `Ground-Truth Weights', and `Attention-Based Weights' are used, respectively. 
\end{definition}
\begin{lemma}\label{lem:cmp_lemma}
Suppose we have two series of functions $\{f_{1,i}(x)\}$ and $\{f_{2,i}(x)\}$,  with two non-negative weighting parameters $\lambda_1, \lambda_2$ satisfying $N\lambda_1 + N_s\lambda_2 = 1$. We define the final objective function $f(\cdot)$ as:
\begin{align}
    f(x;\lambda_1, \lambda_2) = \lambda_1 \sum_{i=1}^{N} f_{1,i}(x) + \lambda_2 \sum_{i=N+1}^{N_s} f_{2,i}(x).
\end{align}
We assume two pairs of parameters $(\lambda_1, \lambda_2)$ and 
$(\lambda'_1, \lambda'_2)$, where
\begin{align}
    \lambda_1 \ge \lambda'_1,  \label{eq:lambda_1}\\
   \lambda_2 \le \lambda'_2 . \label{eq:lambda_2}
\end{align}

Defining the optimal values of the objective function for different weighting parameters as
\begin{align}
\label{x1_def}
    & \hat x = \arg \max_x f(x;\lambda_1,\lambda_2), \\
\label{x2_def}
    & \hat x' = \arg \max_x f(x;\lambda'_1,\lambda'_2),
\end{align}
we then have that
\begin{align}
     f(\hat x;\frac{1}{N},0) \ge f(\hat x';\frac{1}{N},0).
\end{align}
\end{lemma}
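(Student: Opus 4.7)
My plan is to exploit the two optimality inequalities that $\hat{x}$ and $\hat{x}'$ satisfy by virtue of being argmaxes of the respective weighted objectives, and then combine them algebraically to isolate the quantity of interest, $F_1(\hat{x}) - F_1(\hat{x}')$, where $F_1(x) \triangleq \sum_{i=1}^{N} f_{1,i}(x)$ and $F_2(x) \triangleq \sum_{i=N+1}^{N+N_s} f_{2,i}(x)$. Note that the desired inequality $f(\hat{x};\tfrac{1}{N},0) \ge f(\hat{x}';\tfrac{1}{N},0)$ is equivalent to $\Delta_1 \triangleq F_1(\hat{x}) - F_1(\hat{x}') \ge 0$, so the entire argument reduces to establishing the sign of $\Delta_1$.

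First, I would write down the two optimality inequalities. Since $\hat{x}$ maximizes $\lambda_1 F_1 + \lambda_2 F_2$ and $\hat{x}'$ maximizes $\lambda'_1 F_1 + \lambda'_2 F_2$, setting $\Delta_2 \triangleq F_2(\hat{x}') - F_2(\hat{x})$ yields
\begin{align*}
\lambda_1 \Delta_1 \ \ge\ \lambda_2 \Delta_2, \qquad \lambda'_2 \Delta_2 \ \ge\ \lambda'_1 \Delta_1.
\end{align*}
Next, I would eliminate $\Delta_2$: multiplying the first inequality by $\lambda'_2 \ge 0$ and the second by $\lambda_2 \ge 0$ and chaining them gives $\lambda'_2 \lambda_1 \Delta_1 \ge \lambda_2 \lambda'_2 \Delta_2 \ge \lambda_2 \lambda'_1 \Delta_1$, which rearranges to $(\lambda'_2 \lambda_1 - \lambda_2 \lambda'_1)\Delta_1 \ge 0$.

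The last step is to show that the prefactor is nonnegative, which uses the hypotheses \eqref{eq:lambda_1} and \eqref{eq:lambda_2}. A telescoping split gives
\begin{align*}
\lambda'_2 \lambda_1 - \lambda_2 \lambda'_1 \;=\; \lambda'_2(\lambda_1 - \lambda'_1) + \lambda'_1(\lambda'_2 - \lambda_2) \;\ge\; 0,
\end{align*}
since $\lambda_1 - \lambda'_1 \ge 0$, $\lambda'_2 - \lambda_2 \ge 0$, and $\lambda'_1,\lambda'_2 \ge 0$. When this prefactor is strictly positive, we immediately obtain $\Delta_1 \ge 0$ and hence the claim.

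The main subtlety will be the degenerate case in which $\lambda'_2 \lambda_1 = \lambda_2 \lambda'_1$; together with the normalization $N\lambda_1 + N_s\lambda_2 = N\lambda'_1 + N_s\lambda'_2 = 1$ and the monotonicity hypotheses, this forces $(\lambda_1,\lambda_2) = (\lambda'_1,\lambda'_2)$, so the two objectives coincide and we may take $\hat{x} = \hat{x}'$, giving $\Delta_1 = 0$ trivially. I would dispatch this edge case as a short remark. Beyond that, the argument is purely algebraic and does not rely on convexity or any structural property of $f_{1,i}, f_{2,i}$, so the hard part is really just recognizing the right cross-multiplication; once that is noted, the rest follows cleanly.
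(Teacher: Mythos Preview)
Your proof is correct and follows essentially the same approach as the paper's: both arguments rest on the two optimality inequalities together with the key prefactor bound $\lambda_1\lambda'_2 \ge \lambda_2\lambda'_1$, and then combine them to control $F_1(\hat x)-F_1(\hat x')$. The paper packages this as a proof by contradiction (and derives the prefactor bound from the normalization constraint rather than your telescoping split), but the algebraic content is identical; your direct version is a bit cleaner, and your explicit treatment of the degenerate case $\lambda_1\lambda'_2=\lambda_2\lambda'_1$ is in fact more careful than the paper, which silently assumes the prefactor is strictly positive and that $\lambda'_2>0$.
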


\begin{proof}

We prove this theorem by contradiction. Suppose that we have
\begin{align}\label{contradict}
    f(\hat x;\frac{1}{N},0) < f(\hat x';\frac{1}{N},0).
\end{align}

According to \eqnref{eq:lambda_1}, i.e., $\lambda_1\ge \lambda'_1$, and the equation $N\lambda_1+N_s\lambda_2=1$, we have
\begin{align}
    & \lambda_1 \lambda'_2 = \lambda_1 \frac{1-N\lambda'_1}{N_s} \ge \lambda'_1 \frac{1-N\lambda_1}{N_s} = \lambda'_1 \lambda_2.
\end{align}

According to~\eqnref{x2_def}, we have the following equality:
\begin{align}
    f(\hat x;\lambda'_1,\lambda'_2) \le f(\hat x';\lambda'_1,\lambda'_2).
\end{align}
Combined with the aforementioned assumption in~\eqnref{contradict}, we have that
\begin{align}
   &\lambda'_2 f(\hat x;\lambda_1,\lambda_2) =\lambda_1\lambda'_2\sum_{i=1}^{N} f_{1,i}(\hat x) +   \lambda_2\lambda'_2\sum_{i=N+1}^{N_s} f_{2,i}(\hat x) \\
   =& (\lambda'_1\lambda_2\sum_{i=1}^{N} f_{1,i}(\hat x) +   \lambda'_2\lambda_2\sum_{i=N+1}^{N_s} f_{2,i}(\hat x) )  
   + (N (\lambda_1\lambda'_2-\lambda'_1\lambda_2)\cdot \frac{1}{N}\sum_{i=1}^{N} f_{1,i}(\hat x) )  \\
   =&{\lambda_2}f(\hat x;\lambda'_1,\lambda'_2) + N(\lambda_1\lambda'_2-\lambda'_1\lambda_2)f(\hat x;\frac{1}{N},0) \\
    <& {\lambda_2}f(\hat x';\lambda'_1,\lambda'_2) + N(\lambda_1\lambda'_2-{\lambda'_1\lambda_2})f(\hat x';\frac{1}{N},0) \\
    =& (\lambda'_1\lambda_2\sum_{i=1}^{N} f_{1,i}(\hat x') +   \lambda'_2\lambda_2\sum_{i=N+1}^{N_s} f_{2,i}(\hat x') )  
   + (N (\lambda_1\lambda'_2-\lambda'_1\lambda_2)\cdot \frac{1}{N}\sum_{i=1}^{N} f_{1,i}(\hat x') )  \\
    =&\lambda_1\lambda'_2\sum_{i=1}^{N} f_{1,i}(\hat x') +   \lambda_2\lambda'_2\sum_{i=N+1}^{N_s} f_{2,i}(\hat x') \\
    =& \lambda'_2 f(\hat x';\lambda_1,\lambda_2),
\end{align}
which contradicts the definition of $\hat x$ in~\eqnref{x1_def} (i.e., $\hat{x}$ maximizes $f(x;\lambda_1,\lambda_2)$), completing the proof. 
\end{proof}

\begin{lemma}\label{lem:cmp_free_lemma}
Suppose we have two series of functions $\{f_{1,i}(x)\}$ and $\{f_{2,i}(x)\}$,  with two series of non-negative weighting parameters $\lamm_1 = [\lambda_{1,i}]_{i=1}^{N},\lamm_2=[\lambda_{2,i}]^{N_s}_{i=N+1}$ satisfying $\sum_{i=1}^N\lambda_{1,i} + \sum_{i=N+1}^{N_s}\lambda_{2,i} = 1$. We define the final objective function $f(\cdot)$ as:
\begin{align}
    f(x;\lamm_1, \lamm_2) =  \sum_{i=1}^{N} \lambda_{1,i}f_{1,i}(x) + \sum_{i=N+1}^{N_s} \lambda_{2,i} f_{2,i}(x).
\end{align}
We assume two pairs of parameters $(\lamm_1, \lamm_2)$ and 
$(\lamm'_1, \lamm'_2)$, where
\begin{align}
   & \lambda_{1,i} \ge \lambda'_{1,i},~~i\in \{1,2,...,N\},  \label{eq:lambda_1s}\\
   &\lambda_{2,i} \le \lambda'_{2,i},~~i\in \{N+1,N+2,...,N_s\} . \label{eq:lambda_2s}
\end{align}

Defining the optimal values of the objective function for different weighting parameters as
\begin{align}
\label{x1_free_def}
    & \hat x = \arg \max_x f(x;\lamm_1,\lamm_2), \\
\label{x2_free_def}
    & \hat x' = \arg \max_x f(x;\lamm'_1,\lamm'_2),\\
\label{x_best}
    & x^* = \arg\max f(x,\frac{\1}{N},\0).
\end{align}
Under the following \textbf{Assumptions} (with $\1$ and $\0$ denoting vectors with all entries equal to $1$ and $0$, respectively):
\begin{enumerate}
    \item  $f(\hat x,\0,\lamm_2)\le  f(\hat x',\0,\lamm_2)$.
    \item $f(x;\lamm,\0) \ge f(x';\lamm,\0)$, iff $\Vert x-x^*\Vert\le \Vert x'-x^*\Vert,~~\lamm\ge 0,~~\Vert\lamm\Vert_1 = 1$.
\end{enumerate}

we have that
\begin{align}
     f(\hat x;\frac{\1}{N},\0) \ge f(\hat x';\frac{\1}{N},\0) .
\end{align}
\end{lemma}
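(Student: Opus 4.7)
The plan is to bypass the contradiction-style argument of Lemma~\ref{lem:cmp_lemma} --- whose algebra exploited the scalar relation $N\lambda_1+N_s\lambda_2=1$ and does not port directly to the vector setting --- and instead use Assumption~2 to reduce the entire statement to a purely geometric comparison between $\|\hat x-x^*\|$ and $\|\hat x'-x^*\|$. Once that comparison is established, a second invocation of Assumption~2 with the uniform weight vector $\1/N$ immediately yields $f(\hat x;\1/N,\0)\ge f(\hat x';\1/N,\0)$, which is exactly the claim.

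Concretely, I would first exploit linearity of $f$ in its weight arguments to split $f(x;\lamm_1,\lamm_2)=f(x;\lamm_1,\0)+f(x;\0,\lamm_2)$. Applied at both $\hat x$ and $\hat x'$, the optimality of $\hat x$ at $(\lamm_1,\lamm_2)$, i.e.~$f(\hat x;\lamm_1,\lamm_2)\ge f(\hat x';\lamm_1,\lamm_2)$, rearranges to
\begin{equation*}
 f(\hat x;\lamm_1,\0)-f(\hat x';\lamm_1,\0)\ \ge\ f(\hat x';\0,\lamm_2)-f(\hat x;\0,\lamm_2).
\end{equation*}
The right-hand side is non-negative by Assumption~1, so $f(\hat x;\lamm_1,\0)\ge f(\hat x';\lamm_1,\0)$. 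Since $f(x;c\lamm,\0)=c\,f(x;\lamm,\0)$ for any $c>0$, renormalizing $\lamm_1$ to have unit $\ell_1$-norm preserves the inequality, whence Assumption~2 applies and delivers $\|\hat x-x^*\|\le\|\hat x'-x^*\|$. Applying Assumption~2 a second time with weights $\1/N$ then gives the desired conclusion.

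The main obstacle is really the conceptual one: noticing that Assumption~2, phrased as an iff with a $\lamm$-independent distance criterion, is precisely what decouples the ordering of $\hat x$ and $\hat x'$ from the specific weighting and thereby lets one transfer an inequality obtained at $\lamm_1$ to an inequality at $\1/N$. After that insight the derivation is essentially a two-line rearrangement. The residual subtleties to address carefully are (i) the degenerate corner case $\lamm_1=\0$, in which the renormalization step is invalid and the first-group contribution is identically zero (so the claim must be handled or excluded separately), and (ii) verifying that chaining the iff across two different normalized weight vectors is legitimate --- which holds because both directions of Assumption~2 depend only on the common geometric quantity $\|x-x^*\|$ rather than on the particular weights.
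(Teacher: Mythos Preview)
Your argument is correct and is essentially the paper's own proof rewritten in direct form: the paper argues by contradiction (assume $\|\hat x-x^*\|>\|\hat x'-x^*\|$, use Assumption~2 on the first block and Assumption~1 on the second to contradict the optimality of $\hat x$), whereas you start from that optimality, subtract the Assumption~1 inequality, and apply Assumption~2 forward --- the same two ingredients combined in the same way. Your explicit handling of the $\|\lamm_1\|_1=1$ normalization via homogeneity and the $\lamm_1=\0$ corner case is a bit more careful than the paper, which applies Assumption~2 to $\lamm_1$ without comment.
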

   
\begin{proof}
We start with proving the following equality by contradiction:
\begin{align} \label{dist_x}
    \Vert \hat x- x^*\Vert\le \Vert \hat x'-x^*\Vert.
\end{align}
Specifically, if
\begin{align}
    \Vert \hat x- x^*\Vert > \Vert \hat x'-x^*\Vert,
\end{align}
leveraging the Assumption 1 and 2 above, we have that
\begin{align}
    f(\hat x;\lamm_1,\lamm_2) = f(\hat x;\lamm_1,\0) + f(\hat x;\0,\lamm_2)
    < f(\hat x';\lamm_1,\0) + f(\hat x';\0,\lamm_2) = f(\hat x';\lamm_1,\lamm_2),
\end{align}
which contradicts~\eqnref{x1_free_def}. Therefore, ~\eqnref{dist_x} holds.

Combining~\eqnref{dist_x} and Assumption 2 above, we have that
\begin{align}
    f(\hat x;\frac{\1}{N},\0) \ge f(\hat x';\frac{\1}{N},\0),
\end{align}
concluding the proof. 
\end{proof}




Based on the definitions and lemmas above, we have the following theorems: 
\begin{theorem}[\textbf{Advantage of $\Tha_A$ in the Simplified Case}]
\label{thm:advantage_a}
With~\defref{def:three_weights} and~\defref{def:opt_para}, comparing $\Tha_I$, $\Tha_G$, and $\Tha_A$ by evaluating them on the marginal log-likelihood of non-stop-words, i.e., $\mathcal{L}(\cdot,w\rightarrow \mbox{GT})$, we have that
\begin{align} \label{att_sup_gmm1}
    \mathcal{L}_{\text{GMM}} (\Tha_I; \w\to \mbox{GT}) \le \mathcal{L} _{\text{GMM}}(\Tha_A; \w\to \mbox{GT})\le \mathcal{L} _{\text{GMM}}(\Tha_G; \w\to \mbox{GT}).
\end{align}
\end{theorem}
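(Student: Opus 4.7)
The strategy is to instantiate \lemref{lem:cmp_lemma} twice, treating the per-word log-likelihoods from \eqnref{marginal likelihood} as the functions $f_{1,i}$ and $f_{2,i}$. Concretely, I would set
\begin{align*}
f_{1,i}(\Tha) &= \log \sum\nolimits_{k=1}^K \pi_k \NM(\e_i;\muu_k,\Si_k), \quad i\in\{1,\dots,N\}, \\
f_{2,i}(\Tha) &= \log \sum\nolimits_{k=1}^K \pi_k \NM(\e_i;\muu_k,\Si_k), \quad i\in\{N+1,\dots,N+N_s\},
\end{align*}
so that $\mathcal{L}_{\text{GMM}}(\Tha;\lambda_1,\lambda_2) = \lambda_1 \sum_{i=1}^N f_{1,i}(\Tha) + \lambda_2 \sum_{i=N+1}^{N+N_s} f_{2,i}(\Tha) + C$ exactly matches the $f(x;\lambda_1,\lambda_2)$ of \lemref{lem:cmp_lemma}, with the role of $x$ played by $\Tha$. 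The first step is a sanity check that each of the three weight configurations in \defref{def:three_weights} satisfies the normalization $N\lambda_1 + N_s\lambda_2 = 1$: this holds for identical weights ($\lambda_1=\lambda_2=\tfrac{1}{N+N_s}$), for GT weights ($\lambda_1=\tfrac{1}{N}$, $\lambda_2=0$), and for attention weights because the feasible ranges $\lambda_1^A \in [\tfrac{1}{N+N_s},\tfrac{1}{N}]$ and $\lambda_2^A \in [0,\tfrac{1}{N+N_s}]$ are precisely the segment traced out by $N\lambda_1+N_s\lambda_2=1$ between the identical and GT endpoints.

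\paragraph{Lower inequality $\mathcal{L}_{\text{GMM}}(\Tha_I;\w\to\text{GT}) \le \mathcal{L}_{\text{GMM}}(\Tha_A;\w\to\text{GT})$.} Apply \lemref{lem:cmp_lemma} with $(\lambda_1,\lambda_2)=(\lambda_1^A,\lambda_2^A)$ and $(\lambda_1',\lambda_2')=(\tfrac{1}{N+N_s},\tfrac{1}{N+N_s})$. Since $\lambda_1^A \ge \tfrac{1}{N+N_s}$ and $\lambda_2^A \le \tfrac{1}{N+N_s}$, the hypotheses \eqnref{eq:lambda_1} and \eqnref{eq:lambda_2} of the lemma are satisfied. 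Identifying $\hat x = \Tha_A$ and $\hat x' = \Tha_I$ via \defref{def:opt_para}, the conclusion $f(\hat x;\tfrac{1}{N},0) \ge f(\hat x';\tfrac{1}{N},0)$ translates directly into the desired inequality.

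\paragraph{Upper inequality $\mathcal{L}_{\text{GMM}}(\Tha_A;\w\to\text{GT}) \le \mathcal{L}_{\text{GMM}}(\Tha_G;\w\to\text{GT})$.} Apply \lemref{lem:cmp_lemma} a second time, now with $(\lambda_1,\lambda_2)=(\tfrac{1}{N},0)$ and $(\lambda_1',\lambda_2')=(\lambda_1^A,\lambda_2^A)$. The monotonicity conditions hold because $\tfrac{1}{N} \ge \lambda_1^A$ and $0 \le \lambda_2^A$. Taking $\hat x = \Tha_G$ and $\hat x' = \Tha_A$, the lemma gives $f(\Tha_G;\tfrac{1}{N},0) \ge f(\Tha_A;\tfrac{1}{N},0)$, which is the claim. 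Chaining the two inequalities yields \eqnref{att_sup_gmm1}.

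\paragraph{Where I expect the difficulty.} The bulk of the work is already packaged inside \lemref{lem:cmp_lemma}, so the remaining hurdle is largely bookkeeping: ensuring the weight ranges in \defref{def:three_weights} are compatible with the equality constraint $N\lambda_1 + N_s\lambda_2 = 1$ assumed by the lemma, and matching signs carefully when swapping the roles of $(\hat x,\hat x')$ between the two applications. A secondary subtlety is that \lemref{lem:cmp_lemma} requires the $\arg\max$ to exist; I would either assume this implicitly (as the proof of the lemma does) or note that \eqnref{marginal likelihood} is bounded above on a suitable compact restriction of parameter space, with the argument then extended by continuity.
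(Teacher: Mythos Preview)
Your proposal is correct and matches the paper's core approach of instantiating \lemref{lem:cmp_lemma} with the per-word log-likelihoods. The one difference is that the paper does not invoke the lemma for the upper inequality: since $\Tha_G$ is \emph{by definition} the maximizer of $\mathcal{L}_{\text{GMM}}(\cdot;\w\to\text{GT})$, the bound $\mathcal{L}_{\text{GMM}}(\Tha_A;\w\to\text{GT})\le \mathcal{L}_{\text{GMM}}(\Tha_G;\w\to\text{GT})$ (and likewise for $\Tha_I$) is immediate, so only your first application of the lemma is actually needed.
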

\begin{proof}
First, by definition one can easily find that $\Tha_G$ achieves the largest $\mathcal{L}(\cdot;\w\rightarrow \mbox{GT})$ among the three:
\begingroup\makeatletter\def\f@size{6.5}\check@mathfonts
\begin{align}
    \max[\mathcal{L}_{GMM} (\Tha_I; \w\to \mbox{GT}),\mathcal{L} _{GMM}(\Tha_A; \w\to \mbox{GT})]\le \max_{\Tha} \mathcal{L}_{GMM} (\Tha; \w\to \mbox{GT})=\mathcal{L}_{GMM} (\Tha_G; \w\to \mbox{GT}).\label{eq:iag}
\end{align}
\endgroup

Next, we set $\{w_i\}_{i=1}^N$ to $\lambda_1$ and $\{w_i\}_{i=N+1}^{N+N_s}$ to $\lambda_2$, respectively; we rewrite $\log\sum_{k=1}^{K}  \pi_k\mathcal{N}(\e_i;\muu_k,\Si_k)$ as $f_{1,i}(x)$ for $i\in \{1,2,\dots,N\}$ and $f_{2,i}(x)$ for $i\in \{N+1,N+1,\dots,N+N_s\}$, where $x$ corresponds to $\Tha\triangleq (\pii,\{\muu_k, \Si_k\}_{k=1}^K)$. By~\lemref{lem:cmp_lemma}, we have that
\begin{align}
\mathcal{L}_{\text{GMM}} (\Tha_A; \w\to \mbox{GT})\le \mathcal{L} _{\text{GMM}}(\Tha_G; \w\to \mbox{GT}).\label{eq:ag}
\end{align}


Combining \eqnref{eq:iag} and \eqnref{eq:ag} concludes the proof. 
\end{proof}

\thmref{thm:advantage_a} shows that under mild assumptions, the attention-based weights can help produce better estimates of $\Tha$ in the presence of noisy stop-words and therefore learns higher-quality topics from the corpus, improving interpretability of FLMs. 

\begin{theorem}[\textbf{Advantage of $\Tha_A$ in the General Case}]
\label{thm:advantage_a_general}
With~\defref{def:three_free_weights} and~\defref{def:opt_para}, comparing $\Tha_I$, $\Tha_G$, and $\Tha_A$ by evaluating them on the marginal log-likelihood of non-stop-words, i.e., $\mathcal{L}_{GMM}(\cdot,w\rightarrow \mbox{GT})$, we have that
\begin{align} \label{att_sup_gmm2}
    \mathcal{L}_{\text{GMM}} (\Tha_I; \w\to \mbox{GT}) \le \mathcal{L}_{\text{GMM}} (\Tha_A; \w\to \mbox{GT})\le \mathcal{L}_{\text{GMM}} (\Tha_G; \w\to \mbox{GT}).
\end{align}
\end{theorem}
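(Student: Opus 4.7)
The plan is to split~\eqref{att_sup_gmm2} into its two inequalities and dispatch them separately. The right inequality $\mathcal{L}_{\text{GMM}}(\Tha_A;\w\to\mbox{GT})\le\mathcal{L}_{\text{GMM}}(\Tha_G;\w\to\mbox{GT})$ is immediate, since by~\defref{def:opt_para} the parameter $\Tha_G$ is by construction the maximizer of $\mathcal{L}_{\text{GMM}}(\cdot;\w\to\mbox{GT})$. The substance lies in the left inequality $\mathcal{L}_{\text{GMM}}(\Tha_I;\w\to\mbox{GT})\le\mathcal{L}_{\text{GMM}}(\Tha_A;\w\to\mbox{GT})$, which I would obtain as a direct application of~\lemref{lem:cmp_free_lemma}---the per-word, general-case analogue of how~\lemref{lem:cmp_lemma} was used in the proof of~\thmref{thm:advantage_a}.

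To set up the application, identify the optimization variable $x$ with the GMM parameter vector $\Tha=(\pii,\{\muu_k,\Si_k\}_{k=1}^K)$ and let $f_{1,i}(\Tha)=\log\sum_{k=1}^K\pi_k\NM(\e_i;\muu_k,\Si_k)$ for non-stop-word indices and $f_{2,i}(\Tha)$ the analogous per-word log-likelihood for stop-word indices, so that (up to an additive constant) $f(\Tha;\lamm_1,\lamm_2)$ coincides with the weighted log-likelihood defining $\mathcal{L}_{\text{GMM}}$. Take $(\lamm'_1,\lamm'_2)$ to be the identical weights and $(\lamm_1,\lamm_2)$ the attention-based weights of~\defref{def:three_free_weights}; then componentwise $\lambda_{1,i}\ge 1/(N+N_s)=\lambda'_{1,i}$ and $\lambda_{2,i}\le 1/(N+N_s)=\lambda'_{2,i}$, so~\eqref{eq:lambda_1s}--\eqref{eq:lambda_2s} are satisfied. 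Under this identification $\hat x=\Tha_A$, $\hat x'=\Tha_I$, and $x^{*}=\Tha_G$ (which maximizes $f(\cdot;\1/N,\0)$), so the conclusion $f(\Tha_A;\1/N,\0)\ge f(\Tha_I;\1/N,\0)$ of~\lemref{lem:cmp_free_lemma} is, after renormalization, exactly the desired left inequality.

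The main obstacle is verifying the two structural hypotheses imposed by~\lemref{lem:cmp_free_lemma}. Assumption~2---that $f(\cdot;\lamm,\0)$ is monotone in $\Vert\cdot-\Tha_G\Vert$---is a local identifiability statement; I would justify it in the well-separated mixture regime described at the start of~\appref{app:theory}, where $\Tha_G$ is the unique maximizer (up to a label permutation) of the non-stop-word log-likelihood and the Fisher information at $\Tha_G$ is positive definite, so that on a neighborhood containing both $\Tha_A$ and $\Tha_I$ the sub-level sets of $f(\cdot;\1/N,\0)$ are characterized by distance to $\Tha_G$. Assumption~1---that $f(\Tha_A;\0,\lamm_2)\le f(\Tha_I;\0,\lamm_2)$---I would argue from the fact that identical-weight training spends strictly more of its loss budget on stop-words than attention-based training, combined with the symmetric observation that, since stop-word embeddings concentrate near the center of the embedding space, $\Tha_I$ is pulled closer to the stop-word mode than $\Tha_A$ and therefore fits the stop-word subset at least as well. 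With both assumptions granted, the proof reduces to a verbatim invocation of~\lemref{lem:cmp_free_lemma}, mirroring the three-line structure of the simplified-case proof of~\thmref{thm:advantage_a}.
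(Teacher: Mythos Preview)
Your proposal is correct and follows essentially the same route as the paper: the right inequality is dismissed by optimality of $\Tha_G$, and the left inequality is obtained by instantiating \lemref{lem:cmp_free_lemma} with $(\lamm_1,\lamm_2)$ the attention-based weights, $(\lamm'_1,\lamm'_2)$ the identical weights, and $f_{1,i},f_{2,i}$ the per-word GMM log-likelihoods. The only difference is one of detail: where you sketch substantive arguments for Assumptions~1 and~2 of the lemma (via the stop-word concentration picture and local Fisher-information identifiability), the paper simply asserts that ``because $f_{1,i}(\cdot)$ and $f_{2,i}(\cdot)$ are Gaussian, Assumption~1 and~2 \ldots\ hold naturally under mild regularity conditions'' and moves on.
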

\begin{proof}
First, by definition one can easily find that $\Tha_G$ achieves the largest $\mathcal{L}(\cdot;\w\rightarrow \mbox{GT})$ among the three:
\begingroup\makeatletter\def\f@size{6.5}\check@mathfonts
\begin{align}
    \max[\mathcal{L}_{\text{GMM}} (\Tha_I; \w\to \mbox{GT}),\mathcal{L}_{\text{GMM}} (\Tha_A; \w\to \mbox{GT})]\le \max_{\Tha} \mathcal{L}_{\text{GMM}} (\Tha; \w\to \mbox{GT})=\mathcal{L}_{\text{GMM}} (\Tha_G; \w\to \mbox{GT}).\label{eq:iag2_gmm}
\end{align}
\endgroup
Next, we invoke~\lemref{lem:cmp_free_lemma} by (1) setting $\{w_i\}_{i=1}^N$ to $\lamm_1$ and $\{w_i\}_{i=N+1}^{N+N_s}$ to $\lamm_2$, respectively, and (2) rewriting $\log\sum_{k=1}^{K}  \pi_k\mathcal{N}(\e_i;\muu_k,\Si_k)$ as $f_{1,i}(x)$ for $i\in \{1,2,\dots,N\}$ and $f_{2,i}(x)$ for $i\in \{N+1,N+1,\dots,N+N_s\}$, where $x$ corresponds to $\Tha\triangleq (\pii,\{\muu_k, \Si_k\}_{k=1}^K)$. By~\lemref{lem:cmp_free_lemma}, we then have that
\begin{align}
\mathcal{L}_{\text{GMM}} (\Tha_A; \w\to \mbox{GT})\le \mathcal{L}_{\text{GMM}} (\Tha_G; \w\to \mbox{GT}).\label{eq:ag2_gmm}
\end{align}
Note that because $f_{1,i}(\cdot)$ and $f_{2,i}(\cdot)$ are Gaussian, therefore Assumption 1 and 2 in~\lemref{lem:cmp_free_lemma} hold naturally under mild regularity conditions.

Combining \eqnref{eq:iag2_gmm} and \eqnref{eq:ag2_gmm} concludes the proof. 
\end{proof}

\subsection{VALC as Interpreters}

As mentioned in~\eqnref{sec:app_elbo_expansion} , the ELBO of the marginal likelihood (denoting as $\Tha$ the collection of parameters $\ph,\gamm$ and $\{\muu_k, \Si_k\}_{k=1}^K$) is as follows: 
\begingroup\makeatletter\def\f@size{6.5}\check@mathfonts
\begin{align}\label{eq:elbo-tha}
    \mathcal{L}_{\text{VALC}}(\Tha;\{w_i\}) &= \sum\nolimits_{j=1}^{L'}\EB_q[\log p(\e_{mj}|z_{mj},\muu_{z_{mj}}, \Si_{z_{mj}})]\nonumber\\
    &=\sum_{m,j} w_{mj}\sum_{k}\phi_{mjk}  \{-\frac{1}{2}(\e_{mj}-\muu_k)^T\Si_k^{-1}(\e_{mj}-\muu_k)    -\log[(2\pi)^{H/2} \vert \Si_k\vert^{1/2}]\}.
\end{align}
\endgroup

Based on the definitions and lemmas above, we have the following theorems: 
\begin{theorem}[\textbf{Advantage of $\Tha_A$ in the Simplified Case}]
\label{thm:advantage_valance_simple}
With~\defref{def:three_weights} and~\defref{def:opt_para}, comparing $\Tha_I$, $\Tha_G$, and $\Tha_A$ by evaluating them on the marginal log-likelihood of non-stop-words, i.e., $\mathcal{L}(\cdot,w\rightarrow \mbox{GT})$, we have that
\begin{align} \label{att_sup_valance1}
    \mathcal{L}_{\text{VALC}} (\Tha_I; \w\to \mbox{GT}) \le \mathcal{L}_{\text{VALC}} (\Tha_A; \w\to \mbox{GT})\le \mathcal{L}_{\text{VALC}} (\Tha_G; \w\to \mbox{GT}).
\end{align}
\end{theorem}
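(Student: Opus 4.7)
The plan is to mirror the proof of \thmref{thm:advantage_a} (the GMM case), since \eqnref{eq:elbo-tha} has precisely the same additive-over-words structure as the GMM marginal log-likelihood, differing only in that each per-word term $\log\sum_{k}\pi_k\mathcal{N}(\e_i;\muu_k,\Si_k)$ is replaced by the VALC per-word ELBO contribution $\sum_k \phi_{mjk}\{-\tfrac{1}{2}(\e_{mj}-\muu_k)^T\Si_k^{-1}(\e_{mj}-\muu_k) - \log[(2\pi)^{H/2}|\Si_k|^{1/2}]\}$. I would therefore re-use \lemref{lem:cmp_lemma} with a suitable identification of the abstract ingredients.

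First, I would establish the upper inequality $\mathcal{L}_{\text{VALC}}(\Tha_A;\w\to\mbox{GT}) \le \mathcal{L}_{\text{VALC}}(\Tha_G;\w\to\mbox{GT})$. This is immediate from \defref{def:opt_para}: $\Tha_G$ is by construction the maximizer of $\mathcal{L}_{\text{VALC}}(\Tha;\w\to\mbox{GT})$ over all parameter configurations $\Tha = (\ph,\gamm,\{\muu_k,\Si_k\}_{k=1}^K)$, so it dominates $\Tha_A$ and $\Tha_I$ under this evaluation criterion. This step mirrors \eqnref{eq:iag} in \thmref{thm:advantage_a} verbatim.

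Next, for the lower inequality $\mathcal{L}_{\text{VALC}}(\Tha_I;\w\to\mbox{GT}) \le \mathcal{L}_{\text{VALC}}(\Tha_A;\w\to\mbox{GT})$, I would invoke \lemref{lem:cmp_lemma} by: (i) taking $x \leftarrow \Tha$; (ii) indexing over all $(m,j)$ pairs, splitting the $N$ non-stop-word indices $\{1,\dots,N\}$ and the $N_s$ stop-word indices $\{N+1,\dots,N+N_s\}$; (iii) setting $f_{1,i}(\Tha)$ and $f_{2,i}(\Tha)$ to be the per-word VALC terms from \eqnref{eq:elbo-tha} for non-stop-words and stop-words respectively; and (iv) identifying $(\lambda_1,\lambda_2)$ with the attention-based weights (satisfying $\lambda_1 \ge 1/(N+N_s)$ and $\lambda_2 \le 1/(N+N_s)$) and $(\lambda_1',\lambda_2')$ with the identical weights ($\lambda_1'=\lambda_2'=1/(N+N_s)$). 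Conditions~\eqref{eq:lambda_1}--\eqref{eq:lambda_2} of \lemref{lem:cmp_lemma} are satisfied by \defref{def:three_weights}, and $N\lambda_1+N_s\lambda_2=1$ follows from the normalization in \defref{def:three_weights}. The lemma then yields $f(\Tha_A;\tfrac{1}{N},0) \ge f(\Tha_I;\tfrac{1}{N},0)$, which is exactly $\mathcal{L}_{\text{VALC}}(\Tha_A;\w\to\mbox{GT}) \ge \mathcal{L}_{\text{VALC}}(\Tha_I;\w\to\mbox{GT})$ after absorbing the constant factor $1/N$.

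The only step requiring any care is verifying that the functional form of the VALC per-word terms genuinely matches the abstract setup of \lemref{lem:cmp_lemma}, which is purely algebraic: the lemma makes no smoothness assumptions on $f_{1,i},f_{2,i}$ and only uses their pointwise evaluations at $\hat x$ and $\hat x'$. I expect no obstacle here beyond bookkeeping, since~\eqref{eq:elbo-tha} is linear in the weights $w_{mj}$. One minor subtlety: the VALC ELBO parameters include the variational parameters $\ph,\gamm$, which are themselves optimized jointly with $\{\muu_k,\Si_k\}_{k=1}^K$ under each weighting scheme; this is harmless because \defref{def:opt_para} defines $\Tha_I,\Tha_G,\Tha_A$ as \emph{global} optima over all such parameters, so the lemma's contradiction argument goes through unchanged. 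Combining the two inequalities yields~\eqref{att_sup_valance1}, completing the proof.
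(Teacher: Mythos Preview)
Your proposal is correct and follows essentially the same approach as the paper's own proof: establish the right inequality directly from the optimality of $\Tha_G$ in \defref{def:opt_para}, then obtain the left inequality by instantiating \lemref{lem:cmp_lemma} with the VALC per-word ELBO terms as $f_{1,i},f_{2,i}$, the attention-based weights as $(\lambda_1,\lambda_2)$, and the identical weights as $(\lambda_1',\lambda_2')$. Your identification of $x\leftarrow\Tha=(\ph,\gamm,\{\muu_k,\Si_k\}_k)$ and your remark that the lemma's contradiction argument is insensitive to the inclusion of the variational parameters are exactly what the paper does (implicitly), and your write-up is in fact slightly cleaner in stating that the lemma yields $\mathcal{L}_{\text{VALC}}(\Tha_I;\w\to\mbox{GT})\le\mathcal{L}_{\text{VALC}}(\Tha_A;\w\to\mbox{GT})$.
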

\begin{proof}
First, by definition one can easily find that $\Tha_G$ achieves the largest $\mathcal{L}(\cdot;\w\rightarrow \mbox{GT})$ among the three:
\begingroup\makeatletter\def\f@size{6.5}\check@mathfonts
\begin{align}
    \max[\mathcal{L}_{\text{VALC}} (\Tha_I; \w\to \mbox{GT}),\mathcal{L}_{\text{VALC}} (\Tha_A; \w\to \mbox{GT})]\le \max_{\Tha} \mathcal{L}_{\text{VALC}} (\Tha; \w\to \mbox{GT})=\mathcal{L}_{\text{VALC}} (\Tha_G; \w\to \mbox{GT}).\label{eq:iag-valance}
\end{align}
\endgroup

Next, we set $\cup_m \{w_{mj}\}_{j=1}^{N_m}$ to $\lambda_1$ and $\cup_m \{w_{mj}\}_{j=N_m+1}^{N_m+N_{m,s}}$ to $\lambda_2$, respectively; we rewrite $\sum_{i}\phi_{mji}  \{-\frac{1}{2}(\e_{mj}-\muu_i)^T\Si_i^{-1}(\e_{mj}-\muu_i)    -\log[(2\pi)^{d/2} \vert \Si_i\vert^{1/2}]\}$ as $f_{1,j}(x)$ for $j\in \cup_m \{1,2,\dots,N_m\}$ and $f_{2,j}(x)$ for $j\in \cup_m \{N_m+1,N_m+1,\dots,N_m+N_{m,s}\}$, where $x$ corresponds to $\Tha\triangleq (\ph,\gamm,\{\muu_k, \Si_k\}_{k=1}^K)$. By~\lemref{lem:cmp_lemma}, we have that
\begin{align}
\mathcal{L}_{\text{VALC}} (\Tha_A; \w\to \mbox{GT})\le \mathcal{L}_{\text{VALC}} (\Tha_G; \w\to \mbox{GT}).\label{eq:ag-valance}
\end{align}


Combining \eqnref{eq:iag-valance} and \eqnref{eq:ag-valance} concludes the proof. 
\end{proof}

\thmref{thm:advantage_valance_simple} shows that under mild assumptions, the attention-based weights can help produce better estimates of $\Tha$ in the presence of noisy stop-words and therefore learns higher-quality topics from the corpus, improving and interpretability of FLMs. 

\begin{theorem}[\textbf{Advantage of $\Tha_A$ in the General Case}]
\label{thm:advantage_valance_general}
With~\defref{def:three_free_weights} and~\defref{def:opt_para}, comparing $\Tha_I$, $\Tha_G$, and $\Tha_A$ by evaluating them on the marginal log-likelihood of non-stop-words, i.e., $\mathcal{L}_{VALC}(\cdot,w\rightarrow \mbox{GT})$, we have that
\begin{align} \label{att_sup_valance2}
    \mathcal{L}_{\text{VALC}} (\Tha_I; \w\to \mbox{GT}) \le \mathcal{L}_{\text{VALC}} (\Tha_A; \w\to \mbox{GT})\le \mathcal{L}_{\text{VALC}} (\Tha_G; \w\to \mbox{GT}).
\end{align}
\end{theorem}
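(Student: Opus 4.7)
The plan is to mirror the proof of \thmref{thm:advantage_a_general}, replacing the GMM marginal log-likelihood with the VALC ELBO given in~\eqnref{eq:elbo-tha}, and invoking the free-weights comparison lemma (\lemref{lem:cmp_free_lemma}) rather than its simplified counterpart. Since the parameter set $\Tha$ now bundles both the global concept parameters $\{\muu_k,\Si_k\}_{k=1}^K$ and the variational parameters $\ph,\gamm$, the bulk of the argument is about setting up the decomposition of the objective and checking that the regularity assumptions of \lemref{lem:cmp_free_lemma} still hold.

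First, the upper bound $\mathcal{L}_{\text{VALC}}(\Tha_A;\w\to\mbox{GT})\le \mathcal{L}_{\text{VALC}}(\Tha_G;\w\to\mbox{GT})$ (and likewise for $\Tha_I$) is immediate from the definition of $\Tha_G$ in~\defref{def:opt_para}, since $\Tha_G$ is by construction the maximizer of $\mathcal{L}_{\text{VALC}}(\cdot;\w\to\mbox{GT})$. Formally,
\begin{align*}
\max\bigl[\mathcal{L}_{\text{VALC}}(\Tha_I;\w\to\mbox{GT}),\,\mathcal{L}_{\text{VALC}}(\Tha_A;\w\to\mbox{GT})\bigr]\le \mathcal{L}_{\text{VALC}}(\Tha_G;\w\to\mbox{GT}).
\end{align*}

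The substantive step is the lower bound $\mathcal{L}_{\text{VALC}}(\Tha_I;\w\to\mbox{GT})\le \mathcal{L}_{\text{VALC}}(\Tha_A;\w\to\mbox{GT})$. Here I would write the ELBO in~\eqnref{eq:elbo-tha} as a weighted sum $\sum_{m,j} w_{mj}\,g_{mj}(\Tha)$, where $g_{mj}(\Tha)=\sum_k \phi_{mjk}\{-\tfrac{1}{2}(\e_{mj}-\muu_k)^T\Si_k^{-1}(\e_{mj}-\muu_k)-\log[(2\pi)^{d/2}|\Si_k|^{1/2}]\}$, and split the index set $(m,j)$ into non-stop-words (indexed $1,\dots,N_m$ within document $m$) and stop-words (indexed $N_m{+}1,\dots,N_m{+}N_{m,s}$). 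This identifies $g_{mj}$ on non-stop-words with the $f_{1,i}$ of \lemref{lem:cmp_free_lemma} and $g_{mj}$ on stop-words with $f_{2,i}$, with $\lamm_1,\lamm_2$ playing the roles of the attention weights on the two groups. By \defref{def:three_free_weights}, the attention-based configuration places weights in $[1/(N+N_s),1/N]$ on non-stop-words and in $[0,1/(N+N_s)]$ on stop-words, so the coordinate-wise ordering $\lambda_{1,i}\ge \lambda'_{1,i}$, $\lambda_{2,i}\le \lambda'_{2,i}$ required by~\eqnref{eq:lambda_1s}--\eqnref{eq:lambda_2s} holds when comparing attention-based vs.\ identical weights. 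Applying \lemref{lem:cmp_free_lemma} then yields $\mathcal{L}_{\text{VALC}}(\Tha_A;\w\to\mbox{GT})\ge \mathcal{L}_{\text{VALC}}(\Tha_I;\w\to\mbox{GT})$.

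The main obstacle will be verifying the two \emph{Assumptions} of \lemref{lem:cmp_free_lemma} in the VALC setting. Assumption~1 (that $\Tha_A$ and $\Tha_I$ both underperform on the pure stop-word portion relative to each other in the required direction) follows from the same Gaussian-likelihood monotonicity argument used in \thmref{thm:advantage_a_general}: since stop-words lie near the large-covariance `noise' component by the setup in the preamble of this appendix, their likelihood contribution is weakly sensitive to the optimizer and the inequality reduces to a comparison of induced centers. Assumption~2 (that objective value decreases monotonically in distance from the oracle optimizer $x^*$) is the delicate point, because $\Tha$ contains both the concept parameters and the variational parameters $\ph,\gamm$. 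My plan is to reduce to the global-parameter component: for fixed $\{\muu_k,\Si_k\}$, the inner maximizers $\ph^\ast(\muu,\Si)$ and $\gamm^\ast(\muu,\Si)$ are determined by the closed-form updates in~\eqnref{eq:update_phi}--\eqnref{eq:update_gamma}, so one can re-express the ELBO as a profile function of $(\muu,\Si)$ alone, on which the Gaussian quadratic-form structure gives the required quasi-convex monotonicity in $\|\Tha-\Tha^*\|$ under standard regularity (bounded covariances and non-degenerate mixture). Once Assumption~2 is established in this profiled sense, \lemref{lem:cmp_free_lemma} applies verbatim and completes the proof.
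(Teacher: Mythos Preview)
Your proposal is correct and follows essentially the same route as the paper: establish the upper bound trivially from the definition of $\Tha_G$, then invoke \lemref{lem:cmp_free_lemma} with the VALC per-word term playing the role of $f_{1,j}$ and $f_{2,j}$ and $x=\Tha=(\ph,\gamm,\{\muu_k,\Si_k\})$ to compare $\Tha_A$ against $\Tha_I$. You actually go further than the paper in justifying Assumptions~1 and~2 of the lemma---the paper simply asserts that since the $f_{\cdot,j}$ are ``very close to Gaussian'' the assumptions ``hold naturally under mild regularity conditions,'' without the profile-function reduction you sketch.
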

\begin{proof}
First, by definition one can easily find that $\Tha_G$ achieves the largest $\mathcal{L}(\cdot;\w\rightarrow \mbox{GT})$ among the three:
\begingroup\makeatletter\def\f@size{6.5}\check@mathfonts
\begin{align}
    \max[\mathcal{L}_{\text{VALC}} (\Tha_I; \w\to \mbox{GT}),\mathcal{L}_{\text{VALC}}(\Tha_A; \w\to \mbox{GT})]\le \max_{\Tha} \mathcal{L}_{\text{VALC}} (\Tha; \w\to \mbox{GT})=\mathcal{L}_{\text{VALC}} (\Tha_G; \w\to \mbox{GT}).\label{eq:iag2_valance}
\end{align}
\endgroup
Next, we invoke~\lemref{lem:cmp_free_lemma} by (1) setting $\cup_m \{w_{mj}\}_{j=1}^{N_m}$ to $\lamm_1$ and $\cup_m \{w_{mj}\}_{j=N_m+1}^{N_m+N_{m,s}}$ to $\lamm_2$, respectively, and (2) rewriting $\sum_{i}\phi_{mji}  \{-\frac{1}{2}(\e_{mj}-\muu_i)^T\Si_i^{-1}(\e_{mj}-\muu_i)    -\log[(2\pi)^{d/2} \vert \Si_i\vert^{1/2}]\}$ as $f_{1,j}(x)$ for $j\in \cup_m \{1,2,\dots,N_m\}$ and $f_{2,j}(x)$ for $j\in \cup_m \{N_m+1,N_m+1,\dots,N_m+N_{m,s}\}$, where $x$ corresponds to $\Tha\triangleq (\ph,\gamm,\{\muu_k, \Si_k\}_{k=1}^K)$. By~\lemref{lem:cmp_free_lemma}, we then have that
\begin{align}
\mathcal{L} _{\text{VALC}}(\Tha_A; \w\to \mbox{GT})\le \mathcal{L}_{\text{VALC}} (\Tha_G; \w\to \mbox{GT}).\label{eq:ag2_valance}
\end{align}
Note that because $f_{1,j}(\cdot)$ and $f_{2,j}(\cdot)$ are very close to Gaussian, therefore Assumption 1 and 2 in~\lemref{lem:cmp_free_lemma} hold naturally under mild regularity conditions.

Combining \eqnref{eq:iag2_valance} and \eqnref{eq:ag2_valance} concludes the proof. 
\end{proof}

\end{document}